\documentclass[11pt]{article}

\usepackage{fullpage}
\usepackage[utf8]{inputenc} 
\usepackage[T1]{fontenc}    
\usepackage{hyperref}       
\usepackage{url}            
\usepackage{booktabs}       
\usepackage{amsmath,amsfonts, amsthm}       
\usepackage{nicefrac}       
\usepackage{microtype}      
\usepackage{xcolor}         
\usepackage{caption}
\usepackage{subcaption}
\usepackage{algorithmic}

\usepackage{algorithm}

\usepackage{xcolor}
\usepackage{tikz}
\usetikzlibrary{shapes,arrows}
\tikzstyle{decision} = [diamond, draw, fill=blue!20, 
    text width=4.5em, text badly centered, node distance=3cm, inner sep=0pt]
\tikzstyle{block} = [rectangle, draw, fill=blue!20, 
    text width=5em, text centered, rounded corners, minimum height=4em]
\tikzstyle{line} = [draw, -latex']
\tikzstyle{cloud} = [draw, ellipse,fill=red!20, node distance=3cm,
    minimum height=2em]
\tikzset{every picture/.style={line width=0.75pt}}

\usepackage{dsfont}
\usepackage[short]{optidef}
\usepackage{comment}
\usepackage{thm-restate}
\usepackage{authblk}

\newcommand{\Ps}{\mathcal{P}}

\newcommand{\X}{\mathcal{X}}
\newcommand{\Y}{\mathcal{Y}}
\newcommand{\Z}{\mathcal{Z}}
\newcommand{\1}{\mathds{1}}
\newcommand{\Hs}{\mathcal{H}}
\newcommand{\E}{\mathbb{E}}
\newcommand{\R}{\mathbb{R}}

\newcommand\scalemath[2]{\scalebox{#1}{\mbox{\ensuremath{\displaystyle #2}}}}

\newtheorem{definition}{Definition}

\newtheorem{thm}{Theorem}

\newtheorem*{remark}{Remark}
\DeclareMathOperator*{\argmin}{argmin}

\bibliographystyle{plainurl}

\title{Balanced Filtering via Disclosure-Controlled Proxies}

\author[1]{Siqi Deng*}
\author[1,2]{Emily Diana$^\ddagger$}
\author[1,2]{Michael Kearns}
\author[1,2]{Aaron Roth}
\affil[1]{Amazon AWS AI}
\affil[2]{University of Pennsylvania}

\date{}

\begin{document}
\maketitle

\def\thefootnote{$\ddagger$}\footnotetext{Work conducted during an internship at Amazon}
\def\thefootnote{*}\footnotetext{Corresponding author}

\begin{abstract}
We study the problem of collecting a cohort or set that is \textit{balanced} with respect to sensitive groups when group membership is unavailable or prohibited from use at deployment time. Specifically, our deployment-time collection mechanism does not reveal significantly more about the group membership of any individual sample than can be ascertained from base rates alone. To do this, we study a learner that can use a small set of labeled data to train a proxy function that can later be used for this filtering or selection task. We then associate the range of the proxy function with sampling probabilities; given a new example, we classify it using our proxy function and then select it with probability corresponding to its proxy classification.  Importantly, we require that the proxy classification does not reveal significantly more information about the sensitive group membership of any individual example compared to population base rates alone (i.e., the level of disclosure should be controlled) and show that we can find such a proxy in a sample- and oracle-efficient manner. Finally, we experimentally evaluate our algorithm and analyze its generalization properties.
\end{abstract}
\section{Introduction}
\label{sec:introduction}
There are a variety of situations in which we would like to select a cohort or set that is \emph{balanced} or \emph{representative} (having an approximately equal number of samples from different groups) with respect to race, sex, or other sensitive attributes --- but, we cannot explicitly select based on these attributes. This could be because the attributes are sensitive so were never collected, they could be redacted from the information we see, they could be too resource intensive to collect, or selecting based on these attributes could be illegal.

Consider the context of college admissions. Out of many qualified applicants, a college may prioritize racial diversity when deciding upon the final cohort to admit. However, in the United States Supreme Court decision for \textit{Students for Fair Admissions, Inc. v. President and Fellows of Harvard College}, it was determined that ``Harvard's and UNC's [race-conscious] admissions programs violate the Equal Protection Clause of the Fourteenth Amendment'' \cite{scHarvard}. How might a  college select a racially diverse cohort with affirmative action prohibited?

Our approach is based on training a proxy classifier --- in the form of a decision-tree --- with the following properties:
(1) The set of points classified at each leaf should not be strongly correlated with the protected attribute (the \textit{disclosure-control} part) and
(2) The set of distributions on the protected attributes induced at each leaf should be such that the uniform distribution on protected attributes is in their convex hull (the \textit{balancing} part). The second condition allows us to assign sampling probabilities to the leaves such that if we accept each example with probability corresponding to its proxy classification, in expectation the selected cohort will be balanced with respect to the protected attribute.\footnote{A natural first approach is to add noise to the predictor for the protected attribute, against which we compare. However, we are motivated by the need to have strategies that never involve training a classifier for the protected attribute, especially if it could be used outside of the intended system.}
\subsection{Related Work}
The proxy problem is a subject of ongoing debate in the philosophy of science and causal inference literatures (e.g. \cite{pittphilsci17169,8086cc7d-8c04-3b85-8195-ae76aeea22a5, tchetgen2020introduction, Miao2016IdentifyingCE, 2deb2b11-39b5-327e-ade8-8ba30ec0e1fa,qiu2023doubly,shi2023theory,doi:10.1080/01621459.2023.2191817}), and our work engages with this literature methodologically -- we do not believe that it is our role to take a philosophical or legal stance but rather to broaden the set of available tools. Using proxy variables for sensitive attributes in settings where diversity or equity is a concern has been standard practice, yet in many cases, existing features are chosen for the proxies (such as surname, first name, or geographic location \cite{surname, firstName, Zhang2016AssessingFL}). Rather than using an existing feature as a proxy, we propose deliberately \textit{constructing} a proxy. Several works take this perspective -- in \cite{proxies}, for example, the authors produce a proxy that can be used during training to build a fair model downstream. But often, proxies for protected attributes are explicitly intended to be good predictors for those attributes; it is not clear that using an accurate ``race predictor'' is an acceptable solution to making decisions in which race should not be used (and is often explicitly prohibited). Our primary point of departure is that we train a model to make classifications that are \textit{minimally correlated} with the protected attribute.

While our intended use cases are primarily curation or cohort selection, one may also use our method for collecting balanced data sets for machine learning applications. However, we recommend caution in these scenarios, as our approach does not give guarantees about the level of distortion of the final filtered data set. In order to provide comparisons to existing empirical techniques, however, we do measure our approach against a common data pre-processing technique, SMOTE (Synthetic Minority Oversampling Technique) \cite{smote}. Other re-sampling methods for data balancing include ADASYN \cite{He2008ADASYNAS}, MIXUP \cite{mixup}, SMOTE adaptations \cite{Nguyen2009BorderlineOF, Batista2003BalancingTD, 10.1145/1007730.1007735, Douzas_2018, Han2005BorderlineSMOTEAN, Menardi2012TrainingAA}) and cluster-based approaches that under-sample disproportionately represented classes \cite{CondensedNearestNeighbor, Wilson1972AsymptoticPO, 4309523, kNN, tomek2}. In the causal literature, propensity score re-weighting \cite{propensityScoreReweighting} is also a popular approach to account for group size differences. Each of these techniques, however, requires access to the sensitive attribute. Our approach's primary point of departure is that we do not use the sensitive attribute---or a direct prediction or imputation of it---at the final collection time when we are deploying our method.
\subsection{Limitations and Discussion}
Our contributions are twofold. First, for situations where balance is desired but disclosure is not a concern, we introduce a sampling scheme optimized to collect a balanced cohort. Second, for when disclosure is a concern, we present a method to produce a proxy function for the balanced selection task that is \textit{guaranteed} not to be too disclosive. Below, we discuss important considerations having to do with appropriate usage of our methodology, limitations, and areas for expansion.
\begin{itemize}
\item \textbf{The sensitive attribute is still used to inform the proxy, and our approach relies on accessing a small sample of data with this attribute:}
The proxy training algorithm we propose is not blind to the sensitive attributes, which it must access during \textit{training}. Rather, the proxy does not use these attributes at the time of \textit{deployment}.\footnote{It would be impossible to give an algorithm making no use of the protected attribute during deployment or training and yet promising any sort of  balance --- it would have to  behave identically on any distributions with the same marginals over non-protected attributes, even if they differed on the protected attribute.} We emphasize that there is no contradiction between (1) being able to obtain (once) a small data set labeled with sensitive attributes and then using it to train a classification algorithm (in this case, our proxy model) and (2) having the inability to collect or use sensitive attributes when gathering the bulk of one's data. This is especially true when the final selection criterion is not closely correlated with the sensitive attribute, which is one of our primary objectives. In the algorithmic fairness literature in particular, there is a substantial and growing body of work on learning classifiers that satisfy fairness constraints by sensitive attributes but that do not use these attributes at test time (e.g. \cite{reductions, laftr, 10.1145/3461702.3462629, Prost2019TowardAB}). \textit{These methods still require access to the attributes at training time.} 

The distinction between using sensitive attributes at train versus test time is essential. In certain financial applications in the United States, using race or gender at test time (i.e., when making lending decisions) is illegal. But it is not illegal to use these attributes at training time to audit models for statistical bias and to remove it if found. The distinction in our case is similar: We use these attributes to find a statistical selection criterion but do not use sensitive attributes of individuals to make selection decisions about them.
\item \textbf{The filtered cohort or data set will likely exhibit within-group distortion:}
This is an important consideration that should be taken into account when using our method. Our theoretical guarantees provide bounds on the level of balance and disclosure when measured with respect to the sensitive attributes, but they do not guarantee that the distribution over selected individuals matches that of the true population. In fact, this is perhaps a necessary effect of our process and in many cases may be natural. For example, in the context of college admissions or interview selection, a university or firm is intentionally selecting a pool that is \textit{not} representative of the base population. The use cases for which this quality may create the greatest challenge is in curating data sets for training machine learning models. While our method can improve \textit{representation} in data sets, it will not necessarily lead to improvements in downstream fairness of models trained on the balanced data. We provide a detailed analysis of this in Appendix~\ref{sec:app_exp}.\footnote{One note, however, is that our method does allow for balancing multiple attributes at a time -- therefore, one could ask for a cohort that has the same number of positive and negative examples in each group.}
\item \textbf{Affirmative Action and Legal Challenges:}
We do not propose our method as a way to circumvent the intent of legislation, nor do we make claims regarding the legal or moral appropriateness of its usage in any particular affirmative action setting. Rather, we view it as a tool that can be used, when permitted legally, in settings where diversity or balance is desired but when the sensitive attribute can or should be used only minimally. For example, in the college admissions example, it is also undesirable to use explicit race-based predictors in lieu of observing the sensitive attribute. However, students can include race considerations in their admissions essays, which admissions officers see. Given the stakes of college admissions and the strategic behavior of both sides (applicants and schools), it is very likely that an ad-hoc system will still develop to indirectly make use of racial information for the sake of diversity. In a situation such as this, our method provides a controlled way to achieve such desiderata without unintentionally revealing too much information or making use of highly correlated proxies.
\end{itemize}
\section{Model and Preliminaries} \label{sec:preliminaries}
Let $\Omega = \X \times \Y \times \Z$ be an arbitrary data domain and $\Ps$ be the probability distribution over $\Omega$. $\Ps_x$ will refer to the marginal distribution over $\X$, $\Ps_z$ will refer to the marginal distribution over $\Z$, and $\Ps_{z|x}$ will refer to the conditional distribution over $\Z|\X$. Each data point is a triplet $\omega = (x,y,z) \in \X \times \Y \times \Z$, where $x \in \X$ is the non-sensitive feature vector and $y \in \Y = \{0,1\}$ is the label. The label $y$ is not required in training or applying our filtering method, but it will be used in the analysis of downstream fairness effects of the filtering process provided in Appendix~\ref{sec:app_exp}. In the paper body, therefore, we omit it for clarity.

Unless otherwise specified, we take group membership as disjoint such that $z \in \Z = [K]$ is an integer indicating sensitive group membership, but our framework can easily be extended to the case where group membership need not be disjoint. We consider the uniform distribution, $U$, to be our target distribution over sensitive attributes, where $U=(\frac{1}{K},...,\frac{1}{K})$. We also provide a brief extension to the intersecting case below.

We imagine we can sample unlimited data from $\Ps_x$, but the corresponding value $z$ can only be obtained from self-report, authorized agencies, or human annotation. We use $r_k$ to denote the base rate $\Pr_{z \sim \Ps_z}[z=k]$ in the underlying population distribution. We also assume that we can obtain a limited sample of data $D$ of $n$ samples $\{(x_i,z_i)\}_{i=1}^{n} \subset \Omega$ for which we can observe the true sensitive attribute $z$. We would like to use this sample to collect a much larger set $S \subset \Omega$ such that even if we cannot observe the sensitive attributes, $S$ is balanced with respect to $z$.

Formally, we define a balanced set as follows, where $\Pr_{z \sim S}[z=k]=\frac{1}{|S|} \sum_{i=1}^{|S|} \1_{z_i =k}$ is the empirical distribution of $z$ drawn uniformly from $S$.
\begin{definition}[Balance]
\label{def:balance}
A set $S$ is \textit{balanced} with respect to $K$ disjoint groups $z \in [K]$ if $\Pr_{z \sim S}[z=k] =\frac{1}{K}$ $\forall k$.
\end{definition}
Due to finite sampling, Definition~\ref{def:balance} will rarely be met, even if the underlying \textit{distribution} is uniform over sensitive attributes. Therefore, we also discuss approximate balance:\footnote{We use the $L_2$ norm due to operational reasons, as it allows us to make useful geometric arguments.}
\begin{definition}[$\beta$-Approximate Balance]
\label{def:approximate_balance}
A set $S$ is $\beta$-\textit{approximately balanced} with respect to $K$ disjoint groups $z \in [K]$ if $\lVert (\Pr_{z \sim S}[z=1],...,\Pr_{z \sim S}[z=K]) - (1/K,..,1/K) \rVert _ 2 \leq \beta$.
\end{definition}
Note that $\beta$-approximate balance involves the \textit{distribution} of sensitive groups in $S$: as this distribution deviates farther from the uniform, the imbalance, $\beta$, increases.

In the intersecting groups case, we let the sensitive attribute domain $\mathcal{Z}$ be a binary vector of length $K$, such that $z \in \Z = \{0,1\}^{K}$. We will assume $\Z$ is composed of $G$ group classes $\{Z_i\}_{i=1}^{G}$ (e.g., sex, race, etc), and each group class $Z_i$ has $K_i$ groups. Thus, the vector $z$ will indicate all possible group memberships, where the length of the vector of group memberships is $K = \sum_{i=1}^{G} K_i$. We also replace $U$ with $U_{\text{int}} = (\frac{1}{K_1},...,\frac{1}{K_1},...,\frac{1}{K_G},...,\frac{1}{K_G})$ to indicate the target distribution over intersecting sensitive attributes.

\begin{definition}[Multi-Class Balance]
We will say that a set $S$ is balanced with respect to $K$ intersecting groups if, for any group class $\{Z_i\}_{i=1}^{G}$ composed of groups $\{Z_{i_j}\}_{j=1}^{K_i}$, $\Pr_{z \sim S}[z[Z_{i_j}] = 1]=\frac{1}{K_i}$ $\forall j$.
\end{definition}

\begin{definition}[$\beta$-Approximate Multi-Class Balance]
We say that a set $S$ is $\beta$-approximately balanced with respect to $K$ intersecting groups if 
\[\lVert(\Pr_{z \sim S}[z[Z_{1_1}]=1], \Pr_{z \sim S}[z[Z_{1_2}]=1],...,\Pr_{z \sim S}[z[Z_{G_{K_G - 1}}]=1],\Pr_{z \sim S}[z[Z_{G_{K_G}}]=1]) - U_{\text{int}}\rVert_2 \leq \beta
\]

\end{definition}

\begin{remark}
This definition aims to take into account the fact that for different group classes, there may be a different number of potential groups. For example, there may only be two sex groups but eight income groups. Asking that the representation of each of those categories be one-tenth of the final sample would not make sense. However, our definition does not prevent certain intersections being more represented than others. When the number of groups is small, this can always be dealt with by using the Cartesian product over group classes to enumerate intersectional groups.
\end{remark}

In addition to desiring that our proxy allows us to select an approximately \textit{balanced} cohort, we would also like the classification outcomes of the proxy  not to be overly disclosive. We model this by asking that the posterior distribution on group membership is close to the prior distribution when conditioning on the outcome of the proxy classifier. 
\begin{definition}[$\alpha$-Disclosive Proxy]
\label{def:alpha}
A proxy $g$ is $\alpha$-disclosive (or has disclosure level at most $\alpha$) on set $S$ if, for all sensitive groups $k$ and proxy values $i$, $\lvert \Pr_{z|x \sim S} [z=k|g(x)=i] - \Pr_{z \sim S}[z=k] \rvert \leq \alpha$.
\end{definition}
For any proxy function $g$, we can analyze the distribution of sensitive groups amongst points mapped to each value $k$ in the range of the proxy. Call this conditional distribution $a_k$, let $l$ be the number of unique elements in the range of the proxy, and let $A$ be the $\ell \times K$ matrix whose $k^{th}$ row is $a_k$. Denote the convex hull, defined in Definition~\ref{def:convex_hull}, of the rows of $A$ by $C(A)$. Then, we can add a notion of \textit{balance} into our proxy definition in the following way:
\begin{definition}[$\left(\alpha, \beta \right)$ Proxy]
\label{def:beta}
$g: \mathcal{X} \rightarrow \mathbb{N}$ is an $\left(\alpha, \beta \right)$ proxy if it is $\alpha$-disclosive and $\scalemath{0.9}{\inf_{U' \in C(A)}\lVert U' - U \rVert_2 \leq \beta}$.
\end{definition}
Here, $\inf_{U' \in C(A)}\lVert U' - U \rVert_2$ indicates the Euclidean distance between $U$ and the closest point in $C(A)$. We will slightly abuse notation and refer to this as the distance $\lVert C(A) - U \rVert_2$. The disclosure parameter $\alpha$ controls the amount of additional information the proxy gives about group membership, while the balance parameter $\beta$ quantifies the minimum distance from uniform achievable with any acceptance probabilities for a given proxy. There do exist limitations on how small $\alpha$ can be if we need full balance. With $K$ sensitive groups, the final frequency of each group must be $\frac{1}{K}$ if we desire $\beta=0$: if there is a group with initial frequency $f$, there is no avoiding that $\alpha \geq |f-\frac{1}{K}|$. For some data sets, this unavoidably can be quite large. For example, consider a data set of $\frac{1}{3}$ men and $\frac{2}{3}$ women and assume the proxy $g$ takes values 0 or 1. Of the samples mapped to $g = 0$, $\frac{1}{4}$ are men and $\frac{3}{4}$ are women. Of those mapped to $g=1$, $\frac{1}{2}$ are men and $\frac{1}{2}$ are women. Then, $\alpha = \frac{1}{6}$, because $|\Pr[z=\text{men}|g = 0] - \Pr[z=\text{men}]| = |\frac{1}{3} - \frac{1}{2}| = \frac{1}{6}$. In this example, $\beta$ would be 0, because the convex hull of the conditionals $\Pr[z|g(x)]$ contains $[\frac{1}{2}, \frac{1}{2}]$. Next, we provide definitions for a convex hull and stochastic vector.

\begin{definition}[Convex Hull \cite{boyd2004convex}]
\label{def:convex_hull}
The convex hull of a set of points $S$ in $K$ dimensions is the intersection of all convex sets containing $S$. For $l$ points $s_1,...,s_l$, the convex hull $C$ is  given by the expression:
$C \equiv \{\sum_{i=1}^{l} q_i s_i: q_i \geq 0 \text{ for all } i \text{ and } \sum_{i=1}^{l} q_i = 1\}$
\end{definition}
\begin{definition}[Stochastic Vector \cite{Bill86}]
$v=(v_i)_{i=1}^{\ell}$ is a stochastic vector if $\sum_{i=1}^{\ell} v_i = 1 \text{ and }v_i \geq 0 \; \forall i$.
\end{definition}
Finally, we observe a necessary and sufficient condition for $U$ to be in $C(A)$.
\begin{restatable}[Inclusion in Convex Hull]{lemma}{convexHull}
\label{lem:convex_hull}
Let $A$ be an $l \times K$ matrix and $U$ be a $1 \times K$ vector with $\frac{1}{K}$ in each entry. There exists a stochastic vector $q$ such that $q A = U$ if and only if $U \in C(A)$. 
\end{restatable}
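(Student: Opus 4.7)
The plan is to prove this by directly unwinding the definition of the convex hull and observing that the matrix-vector product $qA$ is literally a convex combination of the rows of $A$ when $q$ is stochastic. In fact, the special structure of $U$ (having $1/k$ in each entry) plays no role in the argument, so the statement really holds for any $1 \times k$ vector $U$.

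The main steps I would carry out are as follows. First, I would let $A_1, \dots, A_l$ denote the rows of $A$ (each a $1 \times k$ vector), so that for any $q = (q_1, \dots, q_l)$ we have the identity $qA = \sum_{i=1}^l q_i A_i$. Second, I would recall from Definition~\ref{def:convex_hull} that $U \in C(A)$ means exactly that $U = \sum_{i=1}^l q_i A_i$ for some coefficients $q_i \geq 0$ with $\sum_i q_i = 1$, i.e., for some stochastic vector $q$. Third, combining these two observations: $U \in C(A)$ iff there exists a stochastic vector $q$ with $U = qA$, which is exactly the biconditional in the lemma.

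To make the forward and reverse directions explicit: for the ``only if'' direction, given a stochastic $q$ with $qA = U$, the coefficients $q_i$ certify $U$ as a convex combination of the rows of $A$, so $U \in C(A)$. For the ``if'' direction, given $U \in C(A)$, the definition of the convex hull supplies nonnegative weights $q_i$ summing to $1$ whose combination of rows equals $U$; stacking these weights into a row vector $q$ gives a stochastic vector with $qA = U$.

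There is essentially no technical obstacle here — the lemma is a restatement of the definition of $C(A)$ in matrix-vector form. The only thing worth flagging is that the statement does not depend on $U$ being the uniform vector; the reason the lemma is stated for the uniform $U$ is purely because that is the case used downstream (to guarantee balance), but the same proof works for any target row vector.
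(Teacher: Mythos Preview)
Your proposal is correct and follows essentially the same approach as the paper's own proof: both identify the rows of $A$, note that $qA = \sum_i q_i a_i$, and then invoke Definition~\ref{def:convex_hull} to see that this sum equals $U$ for a stochastic $q$ precisely when $U$ lies in $C(A)$. Your added remark that the uniform structure of $U$ is irrelevant to the argument is accurate and worth keeping.
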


\begin{proof}
Let $a_i$ denote the $i^{th}$ row of $A$. If $U$ is in $C(A)$, then by Definition~\ref{def:convex_hull}, there exists a non-negative vector $q$ such that $\sum_{i=1}^{l}q_i a_i = U$ and $\sum_{i=1}^{l}q_i=1$. Similarly, for any stochastic $q$, $q A \in C(A)$. Then if $q A = U$, $U \in C(A)$.
\end{proof}

Finally, we outline several key results that we will use in the derivations and proofs for our proxy training algorithm. We begin by considering a zero-sum game between two players, a Learner with strategies in $S_1$ and an Auditor with strategies in $S_2$. The payoff function of the game is $W: S_1 \times S_2 \rightarrow \R_{\ge 0}$.

\begin{definition} [Approximate Equilibrium \cite{Freund}]
\label{def:nuapprox}
A pair of strategies \\ $(s_1, s_2) \in S_1 \times S_2$ is said to be a $\nu$-approximate minimax equilibrium of the game if the following conditions hold: $U(s_1, s_2) - \min_{s'_1 \in S_1}  U(s'_1, s_2)  \le \nu, \quad
\max_{s'_2 \in S_2}  U(s_1, s'_2) -  U(s_1, s_2)  \le \nu$
\end{definition}

Freund and Schapire \cite{Freund} show that if a sequence of actions for the players jointly has low \textit{regret}, the uniform distribution over each player's actions forms an approximate equilibrium:

\begin{thm}[No-Regret Dynamics \cite{Freund}]\label{thm:noregret}
    Let $S_1$ and $S_2$ be convex, and suppose  $W(\cdot, s_2): S_1 \to \R_{\ge 0}$ is convex for all $s_2 \in S_2$ and $W(s_1, \cdot): S_2 \to \R_{\ge 0}$ is concave for all $s_1 \in S_1$. Let $(s_1^1, s_1^2, \ldots, s_1^T)$ and $(s_2^1, s_2^2, \ldots, s_2^T)$ be  sequences of actions for each player. If for $\nu_1,\nu_2 \ge 0$, the regret of the players jointly satisfies
    \[
    \sum_{t=1}^T W(s_1^t, s_2^t) - \min_{s_1 \in S_1} \sum_{t=1}^T W(s_1, s_2^t) \le \nu_1 T \quad
    \max_{s_2 \in S_2} \sum_{t=1}^T W(s_1^t, s_2) - \sum_{t=1}^T W(s_1^t, s_2^t) \le \nu_2 T
    \]
then  the pair $(\bar{s}_1, \bar{s}_2)$ is a $(\nu_1+\nu_2)$-approximate equilibrium, where      $\bar{s}_1 = \frac{1}{T}\sum_{t=1}^T s_1^t \in S_1$ and $\bar{s}_2 = \frac{1}{T}\sum_{t=1}^T s_2^t \in S_2$ are the uniform distributions over the action sequences.
\end{thm}

Additionally, we define a Cost Sensitive Classification (CSC) oracle over a classification model class $\Hs$, which we will use as an efficient subroutine in our algorithm.
\begin{definition}[Weighted Cost-Sensitive Classification Oracle for $\Hs$ \cite{DBLP:journals/corr/abs-1905-12843}]
An instance of a Weighted Cost-Sensitive Classification problem, or a $CSC$ problem, for the class $\Hs$, is given by a set of $n$ tuples $\{w(x_i), x_i,c_i^0,c_i^1 \}_{i=1}^{n}$ such that $c_i^1$ corresponds to the cost for predicting label 1 on sample $x_i$ and $c_i^0$ corresponds to the cost for prediction label $0$ on sample $x_i$. The weight of $x_i$ is denoted by $w(x_i)$. Given such an instance as input, a $CSC (\Hs)$ oracle finds a hypothesis $h \in \Hs$ that minimizes the total cost across all points:
$
h \in \argmin_{h' \in \Hs} \sum_{i=1}^{n} w(x_i) \left[h'(x_i)c_i^1 + (1-h'(x_i))c_i^0\right]
$.
\end{definition}

\section{Computing Sampling Weights from a Proxy (QP Approach)} \label{sec:approach}
Now we introduce our first methodological contribution: a selection approach for producing a balanced set \textit{given} a proxy. At a high level, our approach involves mapping each example to an acceptance probability. We construct such a mapping by labeling the range of the proxy $g:\mathcal{X}\rightarrow \mathbb{N}$ with acceptance probabilities and then selecting samples for our set by applying the proxy function to a sample and keeping it with probability corresponding to the element of the range of the proxy that the point maps to.

Recall the condition distribution matrix 
$A$, where each row represents the distribution of $z$ values mapped to a given proxy value. Our goal is to find acceptance probabilities such that the induced distribution on retained points is uniform over the protected attributes. By Lemma~\ref{lem:convex_hull}, such probabilities exist if $A$ contains the uniform distribution in its convex hull.  Consider the system $qA= U$, where $U = (\frac{1}{K},...,\frac{1}{K})$ and $q$ must be a length $\ell$ stochastic vector. If there is a solution for $q$, we consider this a valid acceptance rate scheme and use it to derive the selection probabilities for our filtering problem. If there is not an exact solution (which will happen frequently) we take $\argmin_q \lVert qA - U \rVert_2$ as our best acceptance rate scheme. \textit{Because this involves solving a quadratic program, we refer to the proxies and accompanying selection schemes produced by this approach as QP (Quadratic Program) proxies.}
\begin{algorithm}
\begin{algorithmic}
\REQUIRE proxy $g$, $D=\{(x_i,z_i)\}_{i=1}^{n}$, number of sensitive groups $K$
\FOR{$j$ in Range($g$)}
\STATE For $k$ in $[K]$, let $a_k = \Pr_{z|x \sim D}[z=k|g(x)=j]$\; 
\STATE Let $\hat{r}_j = \Pr_{x\sim D}[g(x) = j]$\; 
\ENDFOR
\STATE Let $A$ be the matrix with $k^{th}$ row $a_k$ and let $U=(\frac{1}{K},...,\frac{1}{K})$
\STATE $q = \argmin_q \lVert qA -U \rVert_2$ s.t. $q_i \geq 0$ and $\sum q_i = 1$
\STATE For $j$ in Range($g$), set $\rho_j = \frac{q_j}{\hat{r}_j} $ 
\STATE Let $C = \max_j{\rho_j}$ and normalize $\rho_j = \rho_j / C$
\RETURN $\rho$, $A$
\end{algorithmic}
\caption{Finding Acceptance Probabilities $\rho$}
\label{alg:deriving_probs}
\end{algorithm}
\begin{algorithm}
\begin{algorithmic}
\REQUIRE $g$, $\rho$, $\Ps_{x}$
\STATE Draw $x\sim \Ps_{x}$ and compute $g(x)$
\STATE With probability $\rho_{g(x)}$, accept $x$ into sample
\end{algorithmic}
\vspace{-0.1cm}
\caption{Filtering with $\rho$}
\label{alg:sampling}
\end{algorithm}
\begin{restatable}[Filtering According to $\rho$]{lemma}{sampling}
\label{ref:lemma_sampling}
Consider acceptance probabilities $\rho$ and conditional distribution matrix $A$ returned by Algorithm~\ref{alg:deriving_probs}. Then, if $U \in C(A)$, filtering according to $\rho$ 
 as in Algorithm~\ref{alg:sampling} induces a uniform distribution over protected attributes.
\end{restatable}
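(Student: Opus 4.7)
The plan is to directly compute the posterior probability that an accepted sample belongs to group $i$, and show that it equals $1/K$ for every $i$. The proof is essentially bookkeeping once one tracks the definitions of $A$, $q$, $\hat r$, and $\rho$ carefully, so the main task is to verify that the normalizations all work out.

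First I would invoke Lemma~\ref{lem:convex_hull}. The hypothesis $U \in C(A)$ gives the existence of a stochastic vector $q^*$ with $q^* A = U$, which achieves objective value $0$ in the QP on line 5 of Algorithm~\ref{alg:deriving_probs}. Hence any minimizer $q$ produced by the algorithm also satisfies $qA = U$ exactly while being stochastic, so both $q_j \geq 0$ and $\sum_j q_j = 1$ hold. This is the only step that uses the convex-hull hypothesis; once we have $qA = U$, everything that follows is arithmetic.

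Next I would compute, for a single draw $(x,y,z)\sim\Ps$, the joint probability that the sample is accepted and belongs to group $i$. Splitting over the proxy output $j$ and using $\Pr[\hat z(x)=j]=\hat r_j$ and $\Pr[z=i\mid \hat z(x)=j]=A_{j,i}$, the acceptance rule in Algorithm~\ref{alg:sampling} yields
\[
\Pr[\text{accept},\, z=i] \;=\; \sum_{j} \hat r_j \, A_{j,i}\, \rho_j \;=\; \sum_j \hat r_j \, A_{j,i} \cdot \frac{q_j}{C\,\hat r_j} \;=\; \frac{1}{C} \sum_j q_j A_{j,i} \;=\; \frac{(qA)_i}{C} \;=\; \frac{1}{C K},
\]
where $C=\max_j q_j/\hat r_j$ is the normalization from line 7 of Algorithm~\ref{alg:deriving_probs}. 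The key cancellation is the $\hat r_j$ factor, which is exactly why the algorithm divides by $\hat r_j$ when forming $\rho_j$; after this cancellation the $i$-dependence is captured entirely by the $i$-th coordinate of $qA$, which equals $1/K$ by the previous step.

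Finally, since $\Pr[\text{accept},z=i]$ is independent of $i$, dividing by $\Pr[\text{accept}]=\sum_{i'}\Pr[\text{accept},z=i']=1/C$ shows that the conditional distribution over $z$ given acceptance is uniform:
\[
\Pr[z=i \mid \text{accept}] \;=\; \frac{1/(CK)}{1/C} \;=\; \frac{1}{K}.
\]
Since samples in Algorithm~\ref{alg:sampling} are drawn i.i.d.\ and accepted independently, the induced distribution on the retained dataset's protected attributes is uniform, which is exactly the statement of the lemma. The only potential subtlety is making sure $C$ is well-defined and nonzero, which holds because $q$ is a nonzero stochastic vector and each $\hat r_j$ is a probability; the normalization only rescales the acceptance probabilities so that $\max_j \rho_j = 1$, and does not affect the conditional distribution.
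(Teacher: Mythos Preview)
Your proposal is correct and follows essentially the same computation as the paper: split over the proxy value $j$, use $\rho_j \hat r_j = q_j/C$ to cancel the base rates, and reduce to $(qA)_i = 1/K$. If anything you are more careful than the paper, which silently drops the normalization constant $C$ and does not explicitly invoke Lemma~\ref{lem:convex_hull} to justify that the QP minimizer satisfies $qA=U$ exactly.
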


\begin{proof}

We want to show that the distribution over sensitive attributes in the \textit{filtered set} is uniform. We begin by expressing the distribution over sensitive attributes in the filtered set constructively, as the distribution obtained from sampling according to $\rho$. From there, we plug in our definitions of $a_{k,j}$ as the $j^{th}$ element in the $k^{th}$ row of the conditional distribution matrix of $z$ values given proxy values and as well as our definition of $\hat{r}_j$ as the marginal probability that a proxy value is $j$. Finally, we use the result that $qA=U=(\frac{1}{K}...\frac{1}{K})$
\begin{align*}
\sum_{j \in Range(g)} \rho_j \Pr[z=k,g(x)=j]
&= \sum_{j \in Range(g)} \rho_j \Pr[z=k|g(x)=j] \Pr[g(x)=j] \\&= \sum_{j \in Range(g)} a_{k,j} \hat{r}_j \rho_j = \sum_{j \in Range(g)} a_{k,j} q_j = \frac{1}{K}
\end{align*}
\end{proof}
\section{Learning an $(\alpha,\beta)$ Proxy}
\label{sec:proxies}
We have discussed a proxy function $g: \X \rightarrow \mathbb{N}$ that maps samples to proxy groups and described the conditional distribution matrix $A$ indicating the distribution of sensitive attributes \textit{within} each proxy group. In Section~\ref{sec:approach}, we showed how $A$ can be used to derive acceptance probabilities for each group, such that under appropriate conditions, selecting according to these probabilities induces a uniform distribution over the protected attributes. Up until now, however, we have referenced $A$ as fixed --- we have used it to derive retention probabilities but have not described how it and the proxy can be generated. Recall that our proxy function $g \in \mathcal{G}$ takes the form of a decision tree, where each leaf is a \textit{proxy group}. Therefore, each row in $A$, corresponding to the distribution over sensitive attributes in a given \textit{proxy group}, also corresponds to the distribution over these attributes in a given \textit{leaf}.

We grow our decision tree by sequentially making \textit{splits} over the feature space --- our tree will start as a stump and our matrix will have just one row, then we will split the tree into two leaves and the matrix will have two rows, and we will continue in this manner, splitting a leaf (and adding a row to the matrix) at each iteration. We will make these splits by employing a classification function from the pre-specified model class $\Hs \subseteq \{h: X \rightarrow \{0,1\}\}$ assigned to each leaf. Because the two representations, as a matrix or a tree, afford different analytical advantages, we will continue to refer to both as we derive our algorithm. One advantage of the matrix representation is that it allows us to reason about the convex hull of a set of conditional distributions. Lemma~\ref{lem:convex_hull} showed that there is a solution to $qA=U$ for a stochastic vector $q$ if and only if $U$ lies in the convex hull of $A$. Our goal will be to grow our tree (and the matrix $A$) so that the $\inf_{U' \in C(A)} \lVert U' - U \rVert_2$ shrinks at each iteration --- until finally $U$ is contained within (or sufficiently close to) $C(A)$.\footnote{Algorithms~\ref{alg:deriving_probs} and \ref{alg:sampling} and Lemma~\ref{ref:lemma_sampling} extend easily to distributions other than the uniform.}

We begin with a geometric interpretation of $C(A)$ and describe how it changes as our tree and conditional distribution matrix expand. In particular, we grow a tree that has leaves $V$ and keep track of the corresponding matrix $A$ of sensitive attribute distributions conditional on their classification by the tree. We can always label the leaves of a tree with a binary sequence, so from now on we will identify each $V$ with a binary sequence. Using this description, we derive sufficient conditions to decrease the Euclidean distance between $C(A)$ and $U$. We begin with several definitions that we will use to characterize $C(A)$.

\begin{definition}[Vertex]
\label{def:vertex}
Let $R$ be a bijective mapping of vertices to a rows in $A$. Then $V \in \{0,1\}^\mathbb{N}$ is a vertex of $C(A)$ if $R(V)$ corresponds to a row $a_i$ such that $a_i \notin C(\{a_j\}_{j < i})$.
\end{definition}
Note that in our context this means that each \textit{row} of $A$ corresponds to a \textit{vertex} of $C(A)$ as long as it cannot be represented as a convex combination of the other rows. Next, we introduce the function that is used at a node of the decision tree to partition samples into the left or right child. It will also be convenient in our algorithm to make use of randomized splitting functions, so we handle both cases.
\begin{definition}[Splitting Function]
\label{def:split_fn}
We call $h_V \in \Hs$ a deterministic \textit{splitting function} at vertex $V$. A \textit{randomized splitting function} $\tilde{h}_V \in \Delta \Hs$ is a distribution supported on a finite set of deterministic splitting functions $\{h_V^i\}_{i=1}^{n}$ such that $\tilde{h}_V(x) = h_V^i(x)$ with probability $\frac{1}{n}$ for all $i$.
\end{definition}
Each vertex $V$ is paired with a splitting function $\tilde{h}_V$ operating on samples mapped to $V$. To model the \textit{expected} action of a randomized splitting function, we introduce the notion of \textit{sample weights}, where the weight of a sample $x$ at $V$ is the probability that $x$ reaches $V$ in its random walk down the tree (as determined by the randomized splitting function). Here, $V\backslash 0$ indicates the parent of $V$ if $V$ ends in 0, and $V\backslash 1$ indicates the parent if $V$ ends in 1. Note that because $V$ is a binary sequence, we can apply the modulo operator with the binary representation of 2 to isolate the last digit.
\begin{definition}[Sample Weights]
The \textit{weight} of a sample $x$ at vertex $V$ is defined as follows:
$w_0(x)=1$ and for $V \neq 0$,
  $w_V(x) = \left\{\begin{array}{lr}
        w_{V\backslash 0}(x) \cdot \mathbb{E}[\tilde{h}_{V\backslash 0}(x)] & \text{if }V\mod 2 =0\\
        w_{V\backslash 1}(x) \cdot \mathbb{E}[1-\tilde{h}_{V\backslash 1}(x)] & \text{if }V\mod 2 = 1 \\
        \end{array}\right\}$
\end{definition}
We distinguish between $V$ and the collection of weighted samples represented by V, $l_V$. 
\begin{definition}[Collection of Weighted Samples at $V$]
Given randomized splitting functions $\{\tilde{h}_i\}_{i=0}^{V}$, the collection of weighted samples at $V$ is denoted by $\scalemath{0.95}{l_V = \{w_V(x),(x,z):(x,z) \in S\}}$.
\end{definition}
\begin{definition}[Vertex Split]
A vertex split results from applying $\tilde{h}_V$ to $x \in l_V$, where 
$l_{V0} = \{w_{V0}(x),(x,z):(x,z) \in S\}$ and $l_{V1} = \{w_{V1}(x), (x,z):(x,z) \in S\}$.
\end{definition}
After V is split into V0 and V1, V is no longer a vertex, whereas V0 and V1 may be. So, the number of leaves in the tree, and therefore the number of rows in $A$, increased by at most 1.
\subsection{Growing the Convex Hull and Learning a Splitting Function}
\label{sec:desiderata}
Imagine that we have started to grow our proxy tree, but $U$ is not in $C(A)$. We would like to expand $C(A)$ to contain $U$, and intuitively, we might like to expand $C(A)$ in the direction of $U$. One way to do so is to choose a vertex $V$ to split into two vertices, $V1$ and $V0$. We assume that $V1$ is the split such that $R(V1) - R(V)$ is most in the direction of $U - U'$, where $U'$ is the closest point in Euclidean distance to $U$ in $C(A)$.
\begin{definition}[Convex Hull Notation]
Let $\theta$ be the angle between $R(V1) - R(V)$ and $U-U'$, and $U''$ be the closest point to $U$ on the line segment $R(V1) - U'$:

\begin{minipage}{0.5\textwidth}
\begin{align*}
&U' = \arg \min_{U^* \in C(A)} \lVert U - U^*\rVert_2 \\& \cos \theta = \frac{\langle R(V1) - R(V), U - U'\rangle}{\lVert R(V1) - R(V)\rVert_2  \lVert U - U'\rVert_2 } \\ &U'' = tU' + (1-t)R(V1) \text{ where}\\& t = \argmin_{0<t^*<1} \lVert U - (t^* U' + (1-t^*)R(V1))\rVert_2
\end{align*}
\end{minipage}
\begin{minipage}{0.45\textwidth}
\resizebox{\textwidth}{!}{
\begin{tikzpicture}[x=0.75pt,y=0.75pt,yscale=-1,xscale=1, scale=0.5]

\draw   (349.41,6.07) -- (693.67,490.1) -- (1.94,493.24) -- cycle ;
\draw  [color={rgb, 255:red, 208; green, 2; blue, 27 }  ,draw opacity=1 ] (247,386.5) -- (693.67,490.1) -- (1.94,490.1) -- cycle ;
\draw   (343.23,269.89) -- (343.23,283.53) -- (359.53,279.32) -- (343.23,283.53) -- (353.3,294.57) -- (343.23,283.53) -- (333.16,294.57) -- (343.23,283.53) -- (326.93,279.32) -- (343.23,283.53) -- cycle ;
\draw  [dash pattern={on 4.5pt off 4.5pt}] (344.34,277.3) -- (312.3,401.34) -- (244.53,383.83) ;
\draw   (296.98,376.58) -- (317.02,381.29) -- (312.3,401.34) -- (292.26,396.62) -- cycle ;
\draw [color={rgb, 255:red, 74; green, 144; blue, 226 }  ,draw opacity=1 ] [dash pattern={on 4.5pt off 4.5pt}]  (238.5,160.5) -- (247,386.5) ;
\draw [color={rgb, 255:red, 74; green, 144; blue, 226 }  ,draw opacity=1 ] [dash pattern={on 4.5pt off 4.5pt}]  (247,386.5) -- (251.89,496.51) ;
\draw [color={rgb, 255:red, 144; green, 19; blue, 254 }  ,draw opacity=1 ] [dash pattern={on 4.5pt off 4.5pt}]  (238.5,154) -- (312.3,401.34) ;
\draw [color={rgb, 255:red, 144; green, 19; blue, 254 }  ,draw opacity=1 ]   (290,301.5) -- (343.23,283.53) ;
\draw  [color={rgb, 255:red, 144; green, 19; blue, 254 }  ,draw opacity=1 ] (298.18,298.08) -- (303.25,313.22) -- (288.11,318.29) -- (283.04,303.15) -- cycle ;
\draw  [color={rgb, 255:red, 208; green, 2; blue, 27 }  ,draw opacity=1 ][fill={rgb, 255:red, 208; green, 2; blue, 27 }  ,fill opacity=1 ] (680.67,490.1) .. controls (680.67,486.51) and (683.58,483.6) .. (687.17,483.6) .. controls (690.76,483.6) and (693.67,486.51) .. (693.67,490.1) .. controls (693.67,493.69) and (690.76,496.6) .. (687.17,496.6) .. controls (683.58,496.6) and (680.67,493.69) .. (680.67,490.1) -- cycle ;
\draw  [color={rgb, 255:red, 208; green, 2; blue, 27 }  ,draw opacity=1 ][fill={rgb, 255:red, 208; green, 2; blue, 27 }  ,fill opacity=1 ] (240.5,386.5) .. controls (240.5,382.91) and (243.41,380) .. (247,380) .. controls (250.59,380) and (253.5,382.91) .. (253.5,386.5) .. controls (253.5,390.09) and (250.59,393) .. (247,393) .. controls (243.41,393) and (240.5,390.09) .. (240.5,386.5) -- cycle ;
\draw  [color={rgb, 255:red, 208; green, 2; blue, 27 }  ,draw opacity=1 ][fill={rgb, 255:red, 208; green, 2; blue, 27 }  ,fill opacity=1 ] (1.94,493.24) .. controls (1.94,489.65) and (4.85,486.74) .. (8.44,486.74) .. controls (12.03,486.74) and (14.94,489.65) .. (14.94,493.24) .. controls (14.94,496.83) and (12.03,499.74) .. (8.44,499.74) .. controls (4.85,499.74) and (1.94,496.83) .. (1.94,493.24) -- cycle ;
\draw  [color={rgb, 255:red, 74; green, 144; blue, 226 }  ,draw opacity=1 ][fill={rgb, 255:red, 74; green, 144; blue, 226 }  ,fill opacity=1 ] (232,160.5) .. controls (232,156.91) and (234.91,154) .. (238.5,154) .. controls (242.09,154) and (245,156.91) .. (245,160.5) .. controls (245,164.09) and (242.09,167) .. (238.5,167) .. controls (234.91,167) and (232,164.09) .. (232,160.5) -- cycle ;
\draw  [color={rgb, 255:red, 74; green, 144; blue, 226 }  ,draw opacity=1 ][fill={rgb, 255:red, 74; green, 144; blue, 226 }  ,fill opacity=1 ] (245.39,490.01) .. controls (245.39,486.42) and (248.3,483.51) .. (251.89,483.51) .. controls (255.48,483.51) and (258.39,486.42) .. (258.39,490.01) .. controls (258.39,493.6) and (255.48,496.51) .. (251.89,496.51) .. controls (248.3,496.51) and (245.39,493.6) .. (245.39,490.01) -- cycle ;
\draw  [color={rgb, 255:red, 0; green, 0; blue, 0 }  ,draw opacity=1 ][fill={rgb, 255:red, 0; green, 0; blue, 0 }  ,fill opacity=1 ] (305.8,401.34) .. controls (305.8,397.75) and (308.71,394.84) .. (312.3,394.84) .. controls (315.89,394.84) and (318.8,397.75) .. (318.8,401.34) .. controls (318.8,404.93) and (315.89,407.84) .. (312.3,407.84) .. controls (308.71,407.84) and (305.8,404.93) .. (305.8,401.34) -- cycle ;
\draw  [dash pattern={on 4.5pt off 4.5pt}] (274.57,271.05) -- (244.49,387.53) -- (240.5,386.5) ;
\draw  [color={rgb, 255:red, 144; green, 19; blue, 254 }  ,draw opacity=1 ][fill={rgb, 255:red, 144; green, 19; blue, 254 }  ,fill opacity=1 ] (276.54,303.15) .. controls (276.54,299.56) and (279.45,296.65) .. (283.04,296.65) .. controls (286.63,296.65) and (289.54,299.56) .. (289.54,303.15) .. controls (289.54,306.74) and (286.63,309.65) .. (283.04,309.65) .. controls (279.45,309.65) and (276.54,306.74) .. (276.54,303.15) -- cycle ;

\draw (1,514) node [anchor=north west][inner sep=0.75pt]   [align=left]
{};
\draw (514,514) node [anchor=north west][inner sep=0.75pt]   [align=left]
{}; 
\draw (160,120) node [anchor=north west][inner sep=0.75pt]   [align=left] {$R(V1)$};
\draw (361.53,282.32) node [anchor=north west][inner sep=0.75pt]   [align=left] {$U$};
\draw (170,360) node [anchor=north west][inner sep=0.75pt]   [align=left] 
{$R(V)$};
\draw (325,375) node [anchor=north west][inner sep=0.75pt]   [align=left] {$U'$};
\draw (259,450) node [anchor=north west][inner sep=0.75pt]   [align=left] {$R(V0)$};
\draw (240,325) node [anchor=north west][inner sep=0.75pt]    {$\theta $};
\draw (310,295.92) node [anchor=north west][inner sep=0.75pt]    {$\phi $};
\draw (284,250) node [anchor=north west][inner sep=0.75pt]   [align=left] {$U''$};
\end{tikzpicture}
}
\label{fig:temp}
\end{minipage}
\end{definition}
We show that, given certain assumptions, we can lower bound how much this splitting process will decrease the distance from $C(A)$ to $U$. The first condition in Lemma~\ref{lem:vertex_splitting} will be used to derive an objective function over which we can optimize to find a splitting function. The second and third conditions limit the theory to the case where we can prove our progress lemma. The second condition says that the distance between $R(V)$ and $R(V1)$ has to be sufficiently large compared to the existing distance between the uniform distribution and its projection onto $C(A)$. The third condition is needed for the proof, allowing us to make arguments based on right triangles --- it is satisfied when the second condition is met and the angle between $R(V1) - R(V)$ and $U - U'$ is not too large. As these conditions are potentially limiting theoretically, we verify that they are indeed frequently satisfied in the experiments.
\begin{restatable}[Progress via Vertex Split]{lemma}{vertexSplitting}
\label{lem:vertex_splitting}
When a vertex $V$ is split, forming new vertices $V0$ and $V1$, the distance from the convex hull to $U$ decreases by at least a factor of $1 - \gamma$ if
\begin{align}
\label{eq:sufficient_condition}
&\langle R(V1) - R(V), U - U'\rangle /\lVert U - U'\rVert_2 \geq f(\gamma)\text{ and}
\\&\lVert R(V1) - R(V) \rVert_2 \geq (1-\gamma)^{-1} \sqrt{2\gamma - \gamma^2} \lVert U - U'\rVert_2,\quad R(V1) - U' \perp U - U''\; \text{ where} \nonumber
\end{align}
\[\scalemath{0.95}{f(\gamma):= \sqrt{\left(2\gamma - \gamma^2\right)\left(2 - \lVert R(V)- U'\rVert_2^2 + 2\lVert R(V)- U'\rVert_2 (1-\gamma)\sqrt{(\gamma^2 - 2\gamma)\lVert R(V)- U'\rVert_2^2 + 2}\right)}}\]
\end{restatable}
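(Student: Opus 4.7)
The goal is to show $\lVert U - U''\rVert \leq (1-\gamma)\lVert U - U'\rVert$. My plan is to first argue that the segment from $U'$ to $R(V1)$ is contained in the new convex hull, then reduce the claim to a projection inequality via the Pythagorean theorem, and finally verify that inequality using conditions 1 and 2 together with the first-order optimality of $U'$ as a projection onto $C(A)$.

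After the vertex split, the new matrix has all rows of $A$ except $R(V)$, together with two new rows $R(V0)$ and $R(V1)$; let $C'$ denote its convex hull. Since $R(V)$ is itself a convex combination of $R(V0)$ and $R(V1)$ (weighted by the fractions of samples routed to each child), $R(V) \in C'$, so every row of $A$ lies in $C'$ and $C(A) \subseteq C'$. In particular $U' \in C'$, and since $R(V1) \in C'$ as well, the segment from $U'$ to $R(V1)$ lies in $C'$; hence $\lVert U - U''\rVert$ upper bounds the new distance from $U$ to the convex hull. Condition 3 says $R(V1) - U' \perp U - U''$; since $U'' - U'$ is parallel to $R(V1) - U'$, the triangle with vertices $U$, $U''$, $U'$ has a right angle at $U''$, and the Pythagorean theorem gives $\lVert U - U''\rVert^2 = \lVert U - U'\rVert^2 - \lVert U'' - U'\rVert^2$. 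So the claim reduces to showing $\lVert U'' - U'\rVert^2 \geq (2\gamma - \gamma^2)\lVert U - U'\rVert^2$.

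By the standard projection formula, $\lVert U'' - U'\rVert = \langle U - U',\, R(V1) - U'\rangle / \lVert R(V1) - U'\rVert$. Decomposing $R(V1) - U' = (R(V1) - R(V)) + (R(V) - U')$: the inner product of the first summand with $U - U'$ is at least $f(\gamma)\lVert U - U'\rVert$ by condition 1, while the inner product of the second summand with $U - U'$ is non-positive, because $U'$ is the orthogonal projection of $U$ onto the convex set $C(A)$ and $R(V) \in C(A)$ (the first-order optimality condition for convex projections). Choosing coordinates with $\hat u := (U - U')/\lVert U - U'\rVert$, writing $\rho := R(V) - U'$ and $\Delta := R(V1) - R(V)$, and letting $\delta_1$, $\rho_1$ denote their $\hat u$-components, the required inequality becomes $(1-\gamma)(\delta_1 + \rho_1) \geq \sqrt{2\gamma - \gamma^2}\,\lVert (\Delta + \rho)^{\perp}\rVert$, with $\delta_1 \geq f(\gamma)$, $\rho_1 \in [-\lVert\rho\rVert, 0]$, $\lVert\Delta\rVert$ bounded below by condition 2, and $\lVert R(V1) - U'\rVert$ bounded above by the diameter of the probability simplex.

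The main obstacle is this last algebraic verification. One must optimize over the orientation of $\rho$ within its allowed half-space (non-positive $\hat u$-component) and the orientation of $\Delta$ relative to $\hat u$, trading the lower bound $\delta_1 \geq f(\gamma)$ on the along-component against the perpendicular component of $\Delta + \rho$, subject to the simplex-derived upper bound on $\lVert R(V1) - U'\rVert$. The form of $f(\gamma)$ appears calibrated so that the worst-case quadratic is tight; working out the extremal orientation of $\rho$ relative to $\Delta$ should yield the stated expression in $r := \lVert R(V) - U'\rVert$. The geometric reductions are automatic once condition 3 is invoked; all the technical content of the lemma lies in this worst-case verification.
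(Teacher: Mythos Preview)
Your geometric setup is more careful than the paper's in one respect: you explicitly argue that the segment from $U'$ to $R(V1)$ lies in the post-split convex hull, so that $\lVert U-U''\rVert$ really does upper-bound the new distance. The paper simply takes this for granted. Your Pythagorean reduction via condition~3 to the inequality $\lVert U''-U'\rVert \geq \sqrt{2\gamma-\gamma^2}\,\lVert U-U'\rVert$ is also correct and is equivalent to the paper's first step, which phrases the same thing as $\cos\phi \leq 1-\gamma$ for the angle $\phi$ between $U-U''$ and $U-U'$.

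However, the proposal stops precisely where the work begins. You identify the ``main obstacle'' as the worst-case algebraic verification and then do not carry it out; but that verification \emph{is} the content of the lemma, and the form of $f(\gamma)$ is not something you may assume is ``calibrated'' --- it is what the proof must derive. The paper's route here is different from the optimization-over-orientations you sketch: it applies the law of cosines twice, once in the triangle $R(V),U',R(V1)$ and once to express $\cos\phi$, obtaining a closed-form relation between $\cos\phi$ and $\theta$ (the angle between $R(V1)-R(V)$ and $U-U'$). Solving $\lvert\cos\phi\rvert\leq 1-\gamma$ for $\sin\theta$ yields a quadratic whose roots determine the admissible range; the second hypothesis of the lemma guarantees this range contains the origin, and the bound $\lVert R(V1)-R(V)\rVert^2\leq 2$ (from the simplex diameter) is then used to eliminate $\lVert R(V1)-R(V)\rVert$ and arrive at the stated $f(\gamma)$. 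Your projection-formula route can be pushed through to the same endpoint, but you must actually do the extremization rather than gesture at it.

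One smaller point: your appeal to first-order optimality of the projection correctly gives $\langle U-U',\,R(V)-U'\rangle\leq 0$, but note the sign works \emph{against} you when lower-bounding $\langle U-U',\,R(V1)-U'\rangle$; you cannot simply drop that term. You seem to recognize this when you pass to coordinates with $\rho_1\in[-\lVert\rho\rVert,0]$, but the earlier sentence reads as if non-positivity were helpful on its own. The paper's trigonometric computation implicitly treats the angle at $R(V)$ in the triangle $R(V),U',R(V1)$ as $90^\circ+\theta$, which is a stronger relation than first-order optimality alone provides; if you stick with your approach you will need to use the full magnitude $\lVert R(V)-U'\rVert$ (which appears in $f(\gamma)$) rather than just the sign of $\rho_1$.
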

\begin{remark}
These are sufficient, but not necessary, conditions for a split to make sufficient progress. Empirically, we simply require that each split decreases the distance from the convex hull to $U$ by at least a factor of $1-\gamma$ for the algorithm to continue. 
\end{remark}
To summarize, these conditions ask that we split a vertex of the convex hull (equivalently a leaf of the proxy tree), so that the convex hull expands in the direction of the target vector. In other words, we want to split a leaf into the over-represented groups in one child and the under-represented groups in the other child, without violating the disclosure constraints. Lemma~\ref{eq:sufficient_condition} also involves conditions that make sure that this split is sufficiently large to move the convex hull closer to the uniform rather than making minute progress.
Having identified a sufficient condition for a split to make suitable progress toward containing the uniform distribution within the convex hull, we present a subroutine to find an $\alpha$-proxy. We first express Equation~\eqref{eq:sufficient_condition} in a form amenable to use in a linear program:
\begin{restatable}[Objective Function]{lemma}{objectiveFunction}
\label{lemma:objectiveFunction}
Let $m_V$ be the number of samples in $l_V$ and let $h_V$ be the splitting function for vertex $V$. The condition $\frac{\langle R(V1) - R(V), U - U'\rangle}{\lVert U - U'\rVert_2}  \geq f(\gamma)$ is equivalent to 
\begin{align*}
&\sum_{i=1}^{m_V} w_V(x_i)h_V(x_i) ( - Q + \sum_{k=1}^{K} \1_{z_i = k} \left(U_k' - U_k\right)) \leq~0 \\& \text{for } Q_{V,U',\gamma}:=\lVert U' - U \rVert f(\gamma) + \frac{\sum_{i=1}^{m_V} w_V(x_i)\sum_{k=1}^{K} \1_{z_i = k} \left(U_k' - U_k\right)}{\sum_{i=1}^{m_V} w_V(x_i)}
\end{align*}
\end{restatable}

\begin{proof}
We begin by expanding the scaled dot product between $R(V1) - R(V)$ and $U-U'$:
\begin{equation*}
\scalemath{0.9}{\frac{\langle R(V1) - R(V), U - U'\rangle}{\lVert U - U'\rVert } = \sum_{j=1}^{m_V} w_V(x_j) \sum_{k=1}^{K} \1_{z_j = k} \left(\frac{h_V(x_j)}{\sum_{j=1}^{m_V} w_V(x_j) h_V(x_j)} - \frac{1}{\sum_{j=1}^{m_V} w_V(x_j) }\right) \frac{U_k - U_k'}{\lVert U - U'\rVert}}
\end{equation*}

Asking $\frac{\langle R(V1) - R(V), U - U'\rangle}{\lVert U - U'\rVert}  \geq f(\gamma)$ is equivalent to asking $\frac{\langle R(V1) - R(V), U' - U\rangle}{\lVert U' - U\rVert }\leq~f(\gamma)$ or:

\begin{equation}
\label{eq:objective_constraint}
\scalemath{0.91}{\frac{\sum_{j=1}^{m_V} w_V(x_j)h_V(x_j) \sum_{k=1}^{K} \1_{z_j = k} \left(U_k' - U_k\right)}{\sum_{i=1}^{m_V} w_V(x_j) h_V(x_j)} \leq  \lVert U'- U \rVert f(\gamma) +\frac{\sum_{j=1}^{m_V} w_V(x_j)\sum_{k=1}^{K} \1_{z_j = k} \left(U_k' - U_k\right)}{\sum_{j=1}^{m_V} w_V(x_j)}}
 \end{equation}

Finally, the right-hand side is constant given $V$, $U'$, and $\gamma$. Therefore, we represent it by a constant $Q_{V,U',\gamma}:=\lVert U' - U \rVert f(\gamma) + \frac{\sum_{j=1}^{m_V} w_V(x_j)\sum_{k=1}^{K} \1_{z_j = k} \left(U_k' - U_k\right)}{\sum_{j=1}^{m_V} w_V(x_j)}$. This allows us to rewrite Equation~\eqref{eq:objective_constraint} as 
\begin{align*}
\sum_{j=1}^{m_V} w_V(x_j)h_V(x_j) \left( - Q_{V,U',\gamma} + \sum_{k=1}^{K} \1_{z_j = k} \left(U_k' - U_k\right)\right) \leq~0
\end{align*}
\end{proof}

We use Lemma~\ref{lemma:objectiveFunction} to form a cost-sensitive classification problem for vertex $V$, where the constraints make sure that any candidate proxy is no more than $\alpha$-disclosive:

\begin{align}
\label{eqn:program}
&\min_{h_V\in \Hs} \sum_{i=1}^{m_V} w_V(x_i) h_V(x_i) \left( - Q_{V,U',\gamma} + \sum_{k=1}^{K} \1_{z_i = k} \left(U_k' - U_k\right) \right) \text{ s.t. }\forall k\\
&\lvert \frac{\sum_{i=1}^{m_V} w_V(x_i)h_V(x_i) \1_{z_i = k}}{\sum_{i=1}^{m_V} w_V(x_i) h_V(x_i)} - r_k\rvert \leq \alpha \text{ and }\lvert\frac{\sum_{i=1}^{m_V} w_V(x_i)(1-h_V(x_i)) \1_{z_i = k}}{\sum_{i=1}^{m_V} w_V(x_i)(1-h_V(x_i))}- r_k\rvert \leq \alpha \nonumber
\end{align}

Next, we will appeal to strong duality to derive the corresponding Lagrangian. We note that computing an approximately optimal solution to the linear program corresponds to finding approximate equilibrium strategies for both players in the game in which one player, the ``Learner,'' controls the primal variables and aims to minimize the Lagrangian value. The other player, the ``Auditor,'' controls the dual variables and seeks to maximize the Lagrangian value. If we construct our algorithm in such a way that it simulates repeated play of the Lagrangian game such that both players have sufficiently small regret, we can apply  Theorem~\ref{thm:noregret} to conclude that our empirical play converges to an approximate equilibrium of the game. Furthermore, our algorithm will be \textit{oracle efficient}: it will make polynomially many calls to oracles that solve weighted cost-sensitive classification problems over $\Hs$. 

To turn Program~\eqref{eqn:program} into a form amenable to our two-player zero-sum game formulation, we expand $\Hs$ to $\Delta \Hs$, allow our splitting function to be \textit{randomized}, and take expectations over the objective and constraints with respect to deterministic splitting functions drawn according to $\tilde{h}_V$. Doing so yields the following CSC problem to be solved for vertex $V$:

\begin{mini}|s|
{\tilde{h}_V\in \Delta \Hs}{\E_{h_V \sim \tilde{h}_V}\sum_{i=1}^{m_V} w_V(x_i)h_V(x_i)\left( - Q_{V,U',\gamma} + \sum_{k=1}^{K} \1_{z_i = k} \left(U_k' - U_k\right))\right)}{}{}
\addConstraint{\E_{h_V \sim \tilde{h}_V}\sum_{i=1}^{m_V} w_V(x_i) \tilde{h}_V(x_i) \left( \1_{z_i = k} - r_k - \alpha \right)}{\leq 0 \; \forall k}{}
\addConstraint{\E_{h_V \sim \tilde{h}_V}\sum_{i=1}^{m_V} w_V(x_i) (1-h_V(x_i)) \left( \1_{z_i = k} - r_k - \alpha \right)}{\leq 0 \; \forall k}{}
\addConstraint{\E_{h_V \sim \tilde{h}_V}\sum_{i=1}^{m_V} w_V(x_i) h_V(x_i) \left( r_k - \1_{z_i = k} - \alpha \right)}{\leq 0 \; \forall k}{}
\addConstraint{\E_{h_V \sim \tilde{h}_V}\sum_{i=1}^{m_V} w_V(x_i)(1-h_V(x_i)) \left( r_k - \1_{z_i = k} - \alpha \right)}{\leq 0 \; \forall k}{}
\label{eqn:expected_program}
\end{mini}

We solve this constrained optimization problem by simulation a zero-sum two-player game on the Lagrangian dual. Given dual variables $\lambda \in \R_{\geq 0}^{4K}$ such that $\lVert \lambda \rVert_2 \leq \lambda_{max}$ for some constant $\lambda_{max}$, the Lagrangian of Program~\eqref{eqn:expected_program} is:

\begin{align*}
L(\lambda,\tilde{h}_V) = \E_{h_V \sim \tilde{h}_V} \sum_{i=1}^{m_V} w_V(x_i) &\left(- Q_{V,U',\gamma} h_V(x_i) + \sum_{k=1}^{K} h_V(x_i) \1_{z_i = k} \left(U_k' - U_k\right) + \right. \nonumber \\ 
& \left. \left(\lambda_{k,1} h_V(x_i) + \lambda_{k,0} (1-h_V(x_i))\right) \left( \1_{z_i = k} - r_k - \alpha \right) + \right. \\& \left. \left(\lambda_{k,3} h_V(x_i) + \lambda_{k,2} (1-h_V(x_i))\right) \left( r_k - \1_{z_i = k} -  \alpha \right)\nonumber\right)
\end{align*}

Given the Lagrangian, solving Program~\eqref{eqn:expected_program} is equivalent to solving the minimax problem
    $\min_{\tilde{h}_V \in \Delta \Hs} \max_{\lambda \in \R_{\geq 0}^{4K} }L(\lambda,\tilde{h}_V) =  \max_{\lambda \in \R_{\geq 0}^{4K}} \min_{\tilde{h}_V \in \Delta \Hs} L(\lambda,\tilde{h}_V)$, where the minimax theorem holds because the range of the primal variable, i.e., $\Delta \Hs$ is convex and compact, the range of the dual variable, i.e., $\R_{\geq 0}^{4K}$ is convex, and the Lagrangian function $L$ is linear in both primal and dual variables. Therefore, we focus on solving the minimax problem, which can be seen as a two-player zero-sum game between the primal player (the Learner) who is controlling $\tilde{h}_V$ and the dual player (the Auditor) who is controlling $\lambda$. Using no-regret dynamics, we will have the Learner deploy its best response strategy in every round, which will be reduced to a call to $CSC(\Hs)$ and let the Auditor with strategies in $\Lambda=\{ \lambda: 0 \leq \lambda \leq \lambda_{max} \}$ play according to Online Projected Gradient Descent \cite{zinkevich}.

Our local algorithm for splitting a vertex is described in Algorithm~\ref{alg:split}, and its guarantee is given in Theorem~\ref{thm:split}. We note that the algorithm returns a distribution over $\Hs$. Given an action $\lambda$ of the Auditor, we write $LC(\lambda)$ for the vector of costs for labeling each data point as 1. We view our costs as the inner product of the outputs of a deterministic splitting function $h_V$ on the $m_V$ points and corresponding cost vector. We define the cost for labeling an example $0$ to be $0$ for all $x$ ($c^0(x)=0$), and the cost for labeling an example 1 as:
\begin{align*}
c^1(x) = w_V(x) ( -Q_{V,U',\gamma} + \sum_{k=1}^{K} \1_{z_j = k} \left(U_k' - U_k \right) & \left.+ \left( \lambda_{k,1} -\lambda_{k,0} \right) \left(\1_{z = k} - r_k - \alpha \right)  \right. \\& \left. +  \left( \lambda_{k,3} -\lambda_{k,2} \right) \left(r_k - \1_{z = k} - \alpha \right) \right)
\end{align*}

\begin{algorithm}
\begin{algorithmic}
\REQUIRE $\{w_V(x_i),(x_i, z_i)\}_{i=1}^{m_V}$, model class $\Hs$, $CSC(\Hs)$, $\alpha$, $\epsilon$, $\gamma$
\STATE Set $\lambda_{max}= m(K-1)/K\epsilon +2 $ and $T = \lceil \left( 2 K m \left(1+\alpha\right) \lambda_{\max}/\epsilon\right) ^2 \rceil$
\STATE Initialize $\lambda_k$ = 0 $\forall k$
\FOR{$t=1 \dots T$}
\STATE $h_t = \argmin_{h \in \Hs} \langle LC(\lambda), h \rangle$
\STATE $\lambda_t = \lambda_{t-1} + t^{-1/2}\left( \nabla_{\lambda} L \right)^+$; If $\lVert \lambda_t \rVert > \lambda_{\max}$, set $\lambda_t = \lambda_{\max} \frac{\lambda_t}{\lVert \lambda_t \rVert}$
\ENDFOR
\RETURN $\tilde{h}_V := \text{uniform distribution over }h_V^t$
\end{algorithmic}
\caption{Learning a Splitting Function}
\label{alg:split}
\end{algorithm}
\begin{restatable}[Learning an $(\alpha + \epsilon)$-Disclosive Proxy]{thm}{split}
\label{thm:split}
Fix $\alpha$, $\epsilon$, suppose $\Hs$ has finite $VC$ dimension, and suppose $\exists \;\tilde{h}_V^* \in \Delta \Hs$ that is a feasible solution to Program~\eqref{eqn:expected_program}. Then, Algorithm~\ref{alg:split} returns a distribution $\tilde{h}_V$ that is an $\epsilon$-optimal solution to Program~\eqref{eqn:expected_program}.
\end{restatable}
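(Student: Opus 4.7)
The plan is to follow the standard Lagrangian / no-regret analysis for constrained classification problems (the ``reductions'' style that the paper references). Program~\eqref{eqn:expected_program} is a linear objective over the convex set $\Delta\Hs$ subject to linear constraints, so strong duality holds and its optimal value equals that of the minimax problem \eqref{eq:minimax}. My first step would be to verify that restricting the dual ball to $\lVert\lambda\rVert\le\lambda_{\max}$ changes the saddle value by at most $\epsilon/2$: since a strictly feasible $\tilde h_V^\ast$ is assumed, the dual optimum is bounded in terms of the objective range divided by the smallest slack, and the formula $\lambda_{\max}=m(K-1)/(K\epsilon)+2$ is essentially this bound.

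Next I would analyze the two-player zero-sum game played inside Algorithm~\ref{alg:split}. The primal player, controlling $\tilde h_V$, is a best-responder: the Lagrangian is linear in $\tilde h_V\in\Delta\Hs$, so a minimizer is attained at a pure $h\in\Hs$, and this is exactly what $CSC(\Hs)$ computes as $h_t=\argmin_{h\in\Hs}\langle LC(\lambda_t),h\rangle$. Hence the primal player has zero regret. The dual player runs Online Projected Gradient Descent over the radius-$\lambda_{\max}$ positive orthant ball with step size $\eta_t=t^{-1/2}$; I would invoke the standard OGD guarantee to bound its regret by $O(\lambda_{\max} G\sqrt{T})$, where $G$ is a uniform bound on $\lVert\nabla_\lambda L\rVert$. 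The gradient entries are sums of at most $m$ weighted indicator deviations of the form $w_V(x_j)(\1_{z_j=k}-r_k)$ together with $\alpha$-slack terms, giving $G=O(Km(1+\alpha))$.

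Then I would apply the classical Freund--Schapire argument: if one player has zero regret and the other has regret at most $\epsilon T/2$, the empirical average of play is an $\epsilon$-approximate Nash equilibrium of the minimax game. Substituting the OGD bound, the choice $T=\lceil(2Km(1+\alpha)\lambda_{\max}/\epsilon)^2\rceil$ in the algorithm is exactly what drives the average regret below $\epsilon/2$. The returned $\tilde h_V$ is the uniform mixture over $h_V^1,\dots,h_V^T$, which is precisely the average primal play, so it certifies both $\epsilon$-optimality of the objective and $\epsilon$-approximate satisfaction of each disclosure constraint; together these give $\epsilon$-optimality for Program~\eqref{eqn:expected_program}, proving the theorem.

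The main obstacle I anticipate is pinning down the three constants $\lambda_{\max}$, $G$, and $T$ simultaneously so that the pieces fit: $\lambda_{\max}$ must be large enough that truncation does not perturb the minimax value by more than $\epsilon$, but it appears multiplicatively in the OGD regret and hence inside $T$, so the choice has to be tight. The precise $\ell_2$ bound on $\nabla_\lambda L$ in terms of $m$, $K$, and $\alpha$ is a routine but fiddly calculation, and matching it to the stated value of $T$ is the step where one must be most careful.
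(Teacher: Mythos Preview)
Your proposal follows essentially the same no-regret/Lagrangian template as the paper: bound $\lVert\nabla_\lambda L\rVert$ (the paper obtains exactly $2Km(1+\alpha)$), invoke the OGD regret guarantee for the dual player, note that the primal player best-responds via $CSC(\Hs)$ and so has zero regret, apply Freund--Schapire to obtain an $\epsilon$-approximate equilibrium, and then read off approximate feasibility and optimality from the equilibrium property.

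One small mismatch worth flagging: in your first step you justify the size of $\lambda_{\max}$ by appealing to \emph{strict} feasibility of $\tilde h_V^\ast$ and a Slater-type bound on the dual optimum, but the theorem only assumes feasibility. The paper avoids this by arguing directly from the equilibrium in the bounded-$\lambda$ game: if the returned $\tilde h_V$ violated some constraint by $\xi$, the dual player could place $\lambda_{\max}$ on that coordinate and push the Lagrangian up by $\lambda_{\max}\xi$; combining this with $L(\hat\lambda,\tilde h_V^\ast)\le\text{obj}(\tilde h_V^\ast)\le m(K-1)/K$ (mere feasibility suffices for the constraint terms to be nonpositive) forces $\xi\le\bigl(m(K-1)/K+2\epsilon\bigr)/\lambda_{\max}$, and the stated $\lambda_{\max}$ makes this at most $\epsilon$. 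Your step 6 already gestures at this, so step 2 is redundant; once you drop it the argument matches the paper's and no longer needs the unwarranted strict-feasibility hypothesis.
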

Theorem~\ref{thm:split} says that with appropriate conditions on the model class $\Hs$ and access to $CSC(\Hs)$, Algorithm~\ref{alg:split} returns a model satisfying the conditions of Program~\ref{eqn:expected_program} (i.e. produces an acceptable split) up to an additive factor of $\epsilon$. A few requirements of this theorem may not hold in practice and thus motivate our experiments. The choice of a base model class $\mathcal{H}$ impacts whether a feasible solution exists --- typically more complex model classes will be more likely to contain a feasible solution, but this complexity will impact the generalization bounds. Also, the guarantee relies on Algorithm~\ref{alg:split} having access to a cost sensitive classification oracle. In practice, we typically do not have such an oracle so must use a heuristic.
\subsection{Decision Tree Meta-Algorithm}\label{sec:tree}
Finally, we use these results to greedily construct a proxy $g: \mathcal{X} \rightarrow \mathbb{N}$. We do this iteratively using a decision tree, where leaves correspond to proxy groups. We split the data into these leaves in such a way that when we consider the distribution of groups in each leaf, the uniform vector is contained in their convex hull. This allows us to select a balanced set in expectation. In addition, we require that the proxy be $\alpha$-disclosive at every step. We grow the tree as follows, for some tolerance $\beta$: (1) If $\lVert U - C(A)\rVert_2 \leq \beta$, output the tree. (2) Otherwise, look for a leaf to split. If we find a suitable split, make it, and continue. If not, output the tree. To determine if a split is suitable, we use the results from Section~\ref{sec:desiderata}: for fixed approximation factor $\epsilon$, disclosivity budget $\alpha - \epsilon$, and progress parameter $\gamma$, a splitting function $\tilde{h}_V$ must be an $\epsilon$-approximate solution to Program~\ref{eqn:expected_program} (and therefore no more than $\alpha$-disclosive) at vertex $V$. If we can find such an $\tilde{h}_V$ for at least $\frac{\ln{\beta} - \ln{\sqrt{2}}}{\ln{1- \gamma}}$ rounds, the decision tree will be an ($\alpha, \beta$) proxy.
\begin{algorithm}
\begin{algorithmic}
\REQUIRE $D = \{x_i,z_i\}_{i=1}^{n}$, $CSC({\mathcal H})$, $\alpha$, $\epsilon$, $\gamma$, $\beta$
\WHILE{$\inf_{U' \in C(A)}\lVert U' - U\rVert_2 > \beta$}
\STATE Apply Algorithm~\ref{alg:split} to find feasible split (if no feasible split, terminate)
\STATE Expand tree $T$ and re-calculate $A$, $C(A)$
\ENDWHILE 
\RETURN $T$, $A$
\end{algorithmic}
\caption{Learning an ($\alpha, \beta$) Proxy}
\label{alg:decision_tree}
\end{algorithm}

\begin{restatable}[Learning an ($\alpha, \beta$) Proxy]{thm}{numRounds}
\label{thm:rounds}
If the conditions of Lemma~\ref{lem:vertex_splitting} are satisfied at every split, Algorithm~\ref{alg:decision_tree} produces an $(\alpha,\beta)$ proxy in-sample within $\frac{\ln{\beta} - \ln{\sqrt{2}}}{\ln{1- \gamma}}$ rounds.
\end{restatable}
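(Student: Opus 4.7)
The plan is to combine the per-split geometric progress from Lemma~\ref{lem:vertex_splitting} with a bound on the initial distance between $U$ and $C(A)$, and then argue separately that $\alpha$-disclosiveness is preserved via the constraints of Program~\eqref{eqn:expected_program}. Concretely, the theorem has two halves: (i) a geometric ``number of rounds'' bound, and (ii) the $\alpha$-disclosive property. I would prove them independently and then combine.

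For (i), I would first observe that Algorithm~\ref{alg:decision_tree} starts with the tree as a stump, so $A$ has a single row equal to the empirical marginal distribution over $z$ in $S$. Since any two probability vectors $p, q \in \Delta([K])$ satisfy $\lVert p - q \rVert_2^2 \leq \lVert p \rVert_2^2 + \lVert q \rVert_2^2 \leq \lVert p \rVert_1 + \lVert q \rVert_1 = 2$, the initial distance $\lVert U - C(A) \rVert$ is at most $\sqrt{2}$. Then, by the hypothesis that the conditions of Lemma~\ref{lem:vertex_splitting} hold at every split, each successful iteration of the \textbf{while} loop multiplies $\lVert U - C(A) \rVert$ by at most $(1-\gamma)$. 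By induction, after $t$ successful splits $\lVert U - C(A) \rVert \leq \sqrt{2}(1-\gamma)^t$. Requiring this to drop below $\beta$ and taking logs, while remembering that $\ln(1-\gamma) < 0$, yields $t \geq (\ln \beta - \ln \sqrt{2})/\ln(1-\gamma)$, which is exactly the bound claimed. Termination within this many rounds is therefore guaranteed (again using the hypothesis that every split in the while loop is valid).

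For (ii), $\alpha$-disclosiveness follows directly from the constraints of Program~\eqref{eqn:expected_program}, which every split produced by Algorithm~\ref{alg:split} satisfies (as guaranteed by Theorem~\ref{thm:split}). The two constraint families in Program~\eqref{eqn:expected_program} say exactly that, for each sensitive group $k$, the weighted proportion of group $k$ among the samples routed into $V0$ (respectively $V1$) differs from the base rate $r_k$ by at most $\alpha$; this is the empirical version of $\lvert \Pr[z=k \mid \hat{z} = V0] - r_k \rvert \leq \alpha$, which is precisely the definition of an $\alpha$-disclosive proxy evaluated at the new leaves. Since this holds inductively at every split, every leaf of the final tree satisfies the required deviation bound, so the output is in-sample $\alpha$-disclosive. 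Combined with the convex-hull containment from (i), the output is an $(\alpha,\beta)$-proxy in-sample.

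The main obstacle is really already absorbed into Lemma~\ref{lem:vertex_splitting}; once that per-step geometric decrease is in hand, the theorem reduces to a clean iterate-and-take-logs calculation. The only subtle point is justifying the initial distance bound $\sqrt{2}$ cleanly (which follows from $p,q \in \Delta([K])$ as above) and checking that the constraint block of Program~\eqref{eqn:expected_program} genuinely corresponds to the $\alpha$-disclosiveness condition at the two newly-created child leaves, rather than at the vertex being split. Modulo those two bookkeeping points, the proof is a short induction on the number of rounds.
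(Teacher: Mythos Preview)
Your proposal is correct and follows essentially the same approach as the paper: iterate the $(1-\gamma)$-contraction from Lemma~\ref{lem:vertex_splitting}, bound the initial distance by $\sqrt{2}$ via membership in the simplex, take logarithms, and then invoke the constraints of Program~\eqref{eqn:expected_program} for the $\alpha$-disclosiveness part. Your justification of the $\sqrt{2}$ bound (via $\lVert p - q \rVert_2^2 \leq \lVert p \rVert_2^2 + \lVert q \rVert_2^2 \leq \lVert p \rVert_1 + \lVert q \rVert_1$) is in fact more explicit than the paper's, which simply asserts that both points lie in the unit simplex.
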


\begin{proof}
    Let $A_{i^*}$ be the conditional distribution matrix returned by Algorithm~\ref{alg:decision_tree} after $i^*$ rounds. Our goal is to produce $A_{i^*}$ such that $\lVert U - C(A_{i^*}) \rVert_2\leq \beta $. Let $A_0$ be the initial conditional distribution matrix, and observe that if we decrease the distance from the current conditional distribution matrix to $U$ by a factor of $1-\gamma$ each round, at round $i$, $\lVert U - C(A_i)\rVert_2 \leq (1-\gamma)^i \lVert U - C(A_0) \rVert_2$. Further, recall that $\lVert U - C(A_0) \rVert_2 \leq \sqrt{2}$ because both $U$ and $C(A_0)$ must lie in the unit simplex. Setting  $\lVert U - C(A_{i^*}) \rVert_2\leq \beta $, we have $\beta \leq (1-\gamma)^{i^*} \sqrt{2} \implies \frac{\beta}{\sqrt{2}} \leq (1-\gamma)^{i^*} \implies i* \geq \frac{\ln{\beta} - \ln{\sqrt{2}}}{\ln{(1- \gamma)}}$. Then, after $i^* = \frac{\ln{\beta} - \ln{\sqrt{2}}}{\ln{(1- \gamma)}}$ rounds, $\lVert U - C(A_{i^*}) \rVert_2 \leq \gamma$. Finally, because our linear program constrains splits to only those that guarantee $\alpha$-disclosiveness, the final proxy must be $\alpha$-disclosive in-sample.
\end{proof}

Theorem~\ref{thm:rounds} allows us to upper bound the number of times that Algorithm~\ref{alg:decision_tree} performs a split and, therefore, the number of unique proxy groups generated. The theorem's hypothesis states, informally, that it must be possible to find a splitting function at each round that makes both a sufficiently \textit{large} split (i.e. the new vertex is sufficiently far from the old vertex compared to the current distance from the convex hull to the target uniform) and the split is sufficiently in the direction of the target. This theorem, in turn, allows us to state generalization bounds depending on both the number and size of each proxy group.
\begin{restatable}[Generalization]{thm}{generalization}
\label{thm:generalization}
 Let $\epsilon, \delta, \gamma>0$ and $G$ be the proxy class. Let there be $K$ sensitive groups. If each proxy group has at least $\frac{1}{2\epsilon^2}\ln{\frac{8K\cdot VC(\mathcal{G}) \left(\ln{\beta} - \ln\sqrt{2}\right)}{\delta \ln{(1- \gamma)}}}$ samples, with probability $1-\delta$, an $(\alpha,\beta)$ proxy in-sample will be an $(\alpha + 2\epsilon, \beta + K\epsilon\sqrt{\frac{\ln{\beta} - \ln{\sqrt{2}}}{\ln{(1- \gamma)}}})$ proxy out-of-sample.
\end{restatable}
Theorem~\ref{thm:generalization} presents the number of samples needed in each proxy group to obtain a sufficiently small generalization gap in both the disclosure level and imbalance --- it is based on the size of the \textit{smallest proxy group in-sample}, which might get quite small in practice. Furthermore, the generalization gap for $\beta$ scales by an additive factor of the number of sensitive groups, $K$. Therefore, as our problem becomes more challenging, more samples are required to achieve a proxy that performs similarly out-of-sample compared to in-sample.
\section{Experiments}
\label{sec:experiments}
Here, we test our two main methodological contributions. The first is to use Algorithms~\ref{alg:deriving_probs} and~\ref{alg:sampling} to solve $\min_q \lVert qA -U \rVert_2 \text{ subject to } q_i \geq 0 \; \forall i \text{ and }\sum_i q_i = 1$ and derive the corresponding acceptance probabilities $\rho$ for the given proxy. The second is to additionally use Algorithms~\ref{alg:split} and~\ref{alg:decision_tree} to learn a decision-tree proxy \textit{guaranteed} not to exceed a specified level of disclosure. 
\begin{enumerate}
\item \textit{QP Regression and Decision Tree} Proxies: We train a multinomial logistic regression model or decision tree to \textit{directly predict} the sensitive attribute but select our acceptance probabilities by employing Algorithms \ref{alg:deriving_probs} and \ref{alg:sampling}.
\item $(\alpha,\beta)$ Proxy: We use Algorithm~\ref{alg:decision_tree} to develop a proxy for a specified disclosure budget.
\end{enumerate}
We compare the performances of these proxy functions against those of two baselines:\footnote{For the Naive Proxies, QP Proxies, and SMOTE, we interpolate between a uniform and proxy-specific sampling strategy by post-processing: We predict $z$ with the proxy and then, with probability $\eta \in [0,1]$, uniformly re-assign the prediction. Finally, we apply Algorithm~\ref{alg:sampling} to sample according to the post-processed proxy labels and plot the balance and disclosure of the corresponding data set \textit{with respect to the post-processed proxy values}. We use a large point marker for the results without post-processing} 
\begin{enumerate}
\item \textit{Naive Regression and Decision Tree} proxies: We train models to \textit{directly predict} sensitive attributes then sample the same number of points from each predicted group, inducing a conditional distribution matrix of the distribution of sensitive attributes in each proxy group. We then calculate the degree of disclosure and imbalance of the sampled set.
\item SMOTE~\cite{smote}: We train a decision tree to \textit{directly predict} groups and then, using these predictions as input for SMOTE, balance the data by synthesizing minority examples.
\end{enumerate}

\subsection{Data, Hyperparameters, and Compute Time}
We evaluate the disclosure, $\alpha$, and imbalance, $\beta$, obtained by each proxy filtering scheme on the Bank Marketing \cite{MORO201422, Dua:2019}, Adult~\cite{Dua:2019}, and Communities and Crime data sets~\cite{Dua:2019, comm1, comm2, comm3, comm4, comm5}, for which we have 5, 4, and 12 sensitive attribute values, respectively. The Marketing data set consists of 45211 labeled samples with 48 non-sensitive attributes and a sensitive attribute of job type. The downstream classification goal is to predict whether a client will subscribe a term deposit based on a phone call marketing campaign of a Portuguese banking institution. The Adult data set consists of 48842 labeled samples with 14 non-sensitive attributes, and we select race as the sensitive attribute. The associated classification task is to determine whether individuals make over \$50K dollars per year. The Communities and Crime data set consists of 1594 samples with 132 non-sensitive features, race as the sensitive group, and the number of violent crimes per population as the prediction task.

For each experiment, we run trials with 20 different seeds, and for each seed, we input a grid of values with increments of 0.1 for the disclosure parameter, $\alpha$, evenly spaced between $0$ and $1$. We then average over the seeds for each $\alpha$ and calculate empirical $95\%$ confidence intervals (which are displayed as the shaded region around each line in the plots). Each data set is split into three parts of sizes 50\%, 30\%, and 20\%. The first is used to train the proxy. The second is used first to test the filtering effects of the proxy out-of-sample and then to train a classification model on to study downstream performance. The third is the set upon which we apply these classifiers trained on filtered and unfiltered data to see how the group-wise accuracy levels are affected. For brevity, we will refer to these three splits as the ``Train'' set, ``Test'' set, and ``Post-Test'' set, respectively. See Appendix~\ref{sec:app_exp} for an analysis of downstream fairness effects induced by our strategy.

On the Adult and Communities and Crime data sets, one run over the grid of $\alpha$ values typically took between 20 minutes and two hours for the $(\alpha,\beta)$ proxy. On the Marketing data set, running one full experiment over the grid of $\alpha$ values took about three hours. The parameter $\gamma$ was set to 0.0001, the maximum height of the proxy tree was set to 15, and the learning process was stopped once the distance between the convex hull of the conditional distribution matrix and the uniform distribution fell below 0.05. As we used publicly available tabular data sets that has already been cleaned, there were no missing values.

Finally, the choice of oracle (the base model class for the $(\alpha,\beta)$ proxy) is heuristic --- as we do not have a true cost-sensitive classification oracle for Algorithm~\ref{alg:split}, we choose two models that allow us to predict the cost of each example and then classify based on the cost's sign. We experiment with a linear threshold function --- the paired regression classifier (PRC) used in \cite{gerrymandering} and defined below -- as well as the XGBoost Regressor model. We found that the PRC was simpler and seemed to perform at least as well as the XGBoost Regressor, so we relegate the analysis for the latter to Appendix~\ref{sec:app_exp}.

\begin{definition}[Paired Regression Classifier \cite{gerrymandering}]
\label{def:PRC}
The paired regression classifier operates as follows: We form two weight vectors, $z^0$ and $z^1$, where $z^k_i$ corresponds to the penalty assigned to sample $i$ in the event that it is labeled $k$. For the correct labeling of $x_i$, the penalty is $0$. For the incorrect labeling, the penalty is the current sample weight of the point, $w_V$. We fit two linear regression models $h^0$ and $h^1$ to predict $z^0$ and $z^1$, respectively, on all samples. Then, given a new point $x$, we calculate $h^0(x)$ and $h^1(x)$ and output $h(x) = \argmin_{k\in\{0,1\}} h^k(x)$.
\end{definition} 

\subsection{Results}
In Figure~\ref{fig:combinedFairness}, on the Communities data set, the $(\alpha,\beta)$ proxy Pareto-dominates the other approaches in sample, while the QP proxies Pareto-dominate SMOTE and the Naive proxies. All methods generalize well. On the Adult data set, the $(\alpha, \beta)$ proxy primarily dominates the remaining approaches in-sample. The generalization performance for all methods, but particularly the $(\alpha,\beta)$ proxy, is weaker on the Adult data set. This is likely because there are slightly more sensitive groups than in the Communities data set, and the acceptance probabilities were sparse. On the Marketing data set, the $(\alpha,\beta)$ and QP Decision Tree proxies exhibit favorable performance in-sample, driving the imbalance to just above zero at higher levels of disclosure. The plot on the test set shows a more modest improvement in balance for all methods. One source of variance in Figure~\ref{fig:combinedFairness} is the generalization performance by the $(\alpha,\beta)$ proxy. We believe this to be due to the size of the smallest proxy group being quite low (especially for the Marketing data set which has 12 sensitive groups). Recall that the generalization gap depends directly on this quantity. There is also nothing in our method to prevent a sparse sampling scheme. Empirically, we found that in cases where generalization results were weak, the acceptance probabilities were nonzero for only a handful of the final proxy groups. Addressing these weaknesses, if possible, could strengthen our approach.

\begin{figure}[!ht]
\captionsetup[subfigure]{justification=centering}
    \centering
    \begin{subfigure}{\textwidth}
    \centering
    \begin{subfigure}{0.49\textwidth}
    \centering
    \includegraphics[width=\textwidth]{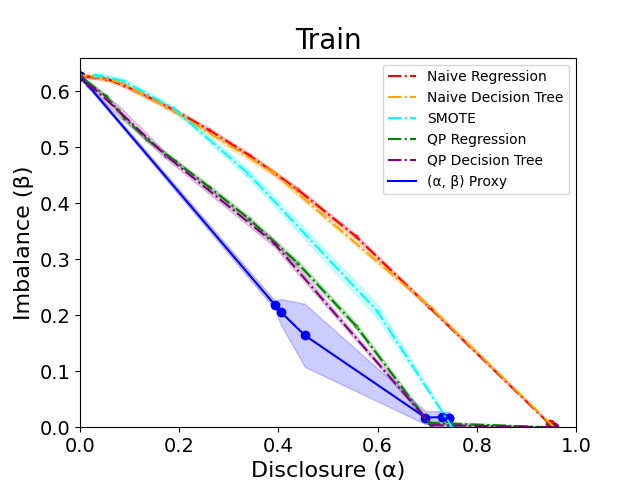}
    \end{subfigure}
    \hfill
    \begin{subfigure}{0.49\textwidth}
    \centering
    \includegraphics[width=\textwidth]{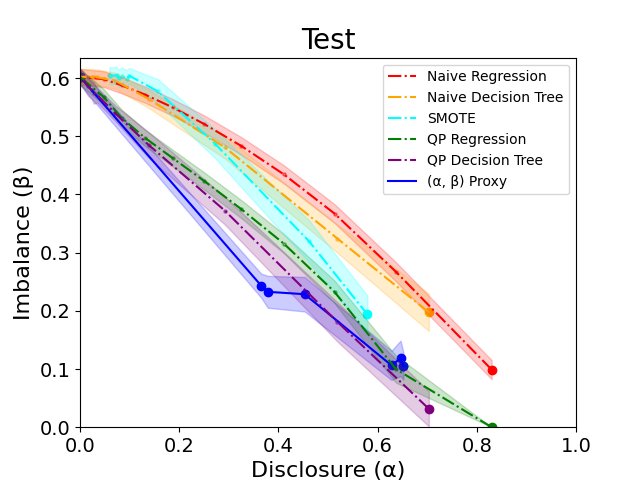}
    \end{subfigure}
    \caption{Communities}
    \end{subfigure}
    \vfill
    \begin{subfigure}{\textwidth}
    \centering
    \begin{subfigure}{0.49\textwidth}
    \centering
    \includegraphics[width=\textwidth]{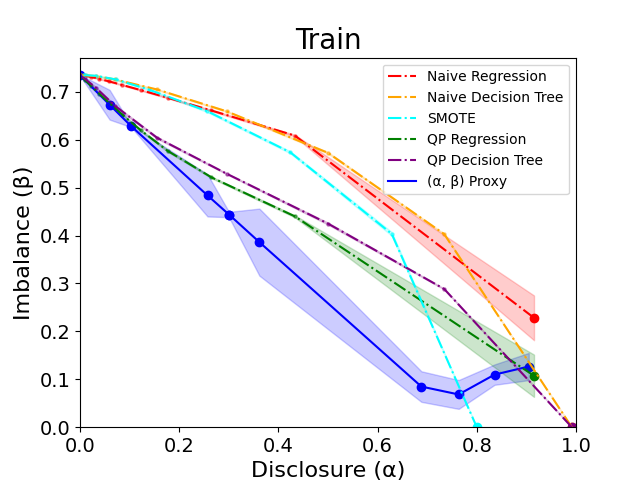}
    \end{subfigure}
    \hfill
    \begin{subfigure}{0.49\textwidth}
    \centering
    \includegraphics[width=\textwidth]{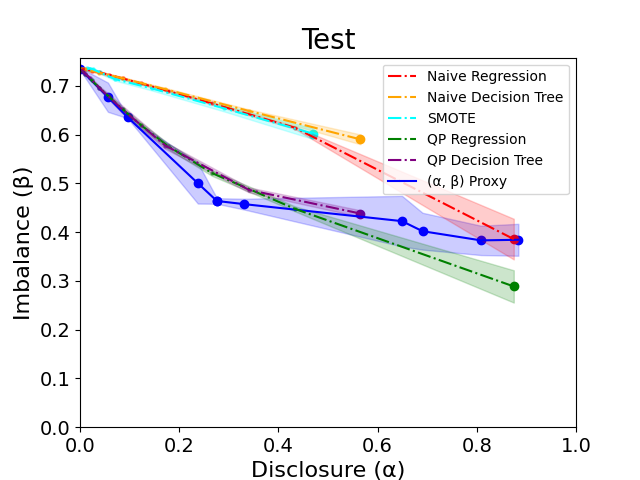}
    \end{subfigure}
    \caption{Adult}
    \end{subfigure}
    \vfill
    \begin{subfigure}{\textwidth}
    \centering
    \begin{subfigure}{0.49\textwidth}
    \centering
    \includegraphics[width=\textwidth]{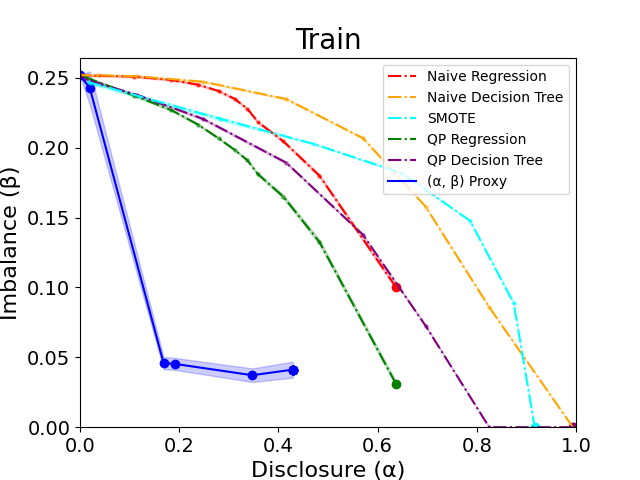}
    \end{subfigure}
    \hfill
    \begin{subfigure}{0.49\textwidth}
    \centering
    \includegraphics[width=\textwidth]{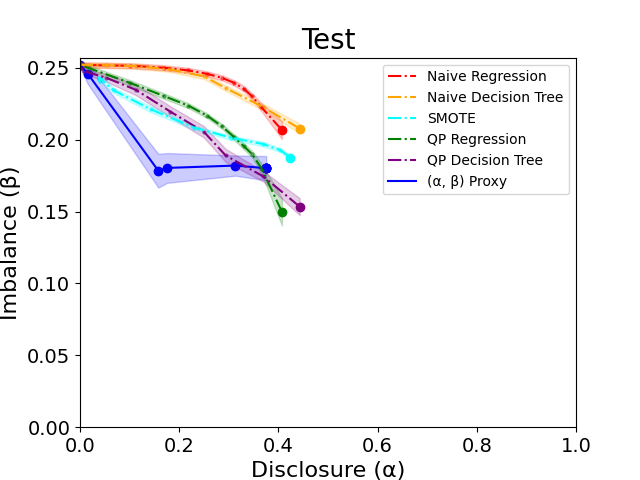}
    \end{subfigure}
    \caption{Marketing}
    \end{subfigure}
    \caption{Trade-off of Disclosure and Balance of Proxies on Communities, Adult, and Marketing}
    \label{fig:combinedFairness}
\end{figure}
\subsection{Discussion and Future Work}\label{sec:discussion}
Our primary conceptual point is that even though the final goal (balance) references the protected attributes, it is a condition on the aggregate composition of the final selected set. Therefore, achieving it does not necessarily require finding a predictor strongly correlated with the protected attribute. We emphasize that while the QP proxies (our secondary contribution) are appealingly simple and provide a range of disclosure levels \textit{after} post-processing, they still involve explicitly training a classifier for the attribute. In contrast, the $(\alpha, \beta)$ proxy (our primary contribution) never involves training a classifier at any step of the process that is more disclosive than a pre-specified threshold. While this does not solve the challenging legal and technical problems associated with proxy use in high-stakes selection processes, it takes a step in this direction by permitting controlled trade-offs between balance and disclosure.

\bibliography{refs}

\begin{thebibliography}{10}

\bibitem{reductions}
Alekh Agarwal, Alina Beygelzimer, Miroslav Dud{\'{\i}}k, John Langford, and Hanna~M. Wallach.
\newblock A reductions approach to fair classification.
\newblock {\em CoRR}, abs/1803.02453, 2018.
\newblock URL: \url{http://arxiv.org/abs/1803.02453}, \href {https://arxiv.org/abs/1803.02453} {\path{arXiv:1803.02453}}.

\bibitem{DBLP:journals/corr/abs-1905-12843}
Alekh Agarwal, Miroslav Dud{\'{\i}}k, and Zhiwei~Steven Wu.
\newblock Fair regression: Quantitative definitions and reduction-based algorithms.
\newblock {\em CoRR}, abs/1905.12843, 2019.
\newblock URL: \url{http://arxiv.org/abs/1905.12843}, \href {https://arxiv.org/abs/1905.12843} {\path{arXiv:1905.12843}}.

\bibitem{8086cc7d-8c04-3b85-8195-ae76aeea22a5}
Larry Alexander.
\newblock What makes wrongful discrimination wrong? biases, preferences, stereotypes, and proxies.
\newblock {\em University of Pennsylvania Law Review}, 141(1):149--219, 1992.
\newblock URL: \url{http://www.jstor.org/stable/3312397}.

\bibitem{Batista2003BalancingTD}
Gustavo E. A. P.~A. Batista, Ana L{\'u}cia~Cetertich Bazzan, and Maria~Carolina Monard.
\newblock Balancing training data for automated annotation of keywords: a case study.
\newblock In {\em WOB}, 2003.

\bibitem{10.1145/1007730.1007735}
Gustavo E. A. P.~A. Batista, Ronaldo~C. Prati, and Maria~Carolina Monard.
\newblock A study of the behavior of several methods for balancing machine learning training data.
\newblock {\em SIGKDD Explor. Newsl.}, 6(1):20–29, jun 2004.
\newblock \href {https://doi.org/10.1145/1007730.1007735} {\path{doi:10.1145/1007730.1007735}}.

\bibitem{Bill86}
Patrick Billingsley.
\newblock {\em Probability and Measure}.
\newblock John Wiley and Sons, second edition, 1986.

\bibitem{boyd2004convex}
Stephen Boyd and Lieven Vandenberghe.
\newblock {\em Convex optimization}.
\newblock Cambridge university press, 2004.

\bibitem{smote}
Nitesh~V. Chawla, Kevin~W. Bowyer, Lawrence~O. Hall, and W.~Philip Kegelmeyer.
\newblock Smote: Synthetic minority over-sampling technique.
\newblock {\em J. Artif. Int. Res.}, 16(1):321–357, June 2002.

\bibitem{doi:10.1080/01621459.2023.2191817}
Yifan Cui, Hongming Pu, Xu~Shi, Wang Miao, and Eric~Tchetgen Tchetgen.
\newblock Semiparametric proximal causal inference.
\newblock {\em Journal of the American Statistical Association}, 0(0):1--12, 2023.
\newblock \href {https://arxiv.org/abs/https://doi.org/10.1080/01621459.2023.2191817} {\path{arXiv:https://doi.org/10.1080/01621459.2023.2191817}}, \href {https://doi.org/10.1080/01621459.2023.2191817} {\path{doi:10.1080/01621459.2023.2191817}}.

\bibitem{proxies}
Emily Diana, Wesley Gill, Michael Kearns, Krishnaram Kenthapadi, Aaron Roth, and Saeed Sharifi-Malvajerdi.
\newblock Multiaccurate proxies for downstream fairness.
\newblock In {\em 2022 ACM Conference on Fairness, Accountability, and Transparency}, FAccT '22, page 1207–1239, New York, NY, USA, 2022. Association for Computing Machinery.
\newblock \href {https://doi.org/10.1145/3531146.3533180} {\path{doi:10.1145/3531146.3533180}}.

\bibitem{Douzas_2018}
Georgios Douzas, Fernando Bacao, and Felix Last.
\newblock Improving imbalanced learning through a heuristic oversampling method based on k-means and {SMOTE}.
\newblock {\em Information Sciences}, 465:1--20, oct 2018.
\newblock URL: \url{https://doi.org/10.1016\%2Fj.ins.2018.06.056}, \href {https://doi.org/10.1016/j.ins.2018.06.056} {\path{doi:10.1016/j.ins.2018.06.056}}.

\bibitem{Dua:2019}
Dheeru Dua and Casey Graff.
\newblock {UCI} machine learning repository, 2017.
\newblock URL: \url{http://archive.ics.uci.edu/ml}.

\bibitem{surname}
Marc~N. Elliott, Peter~A. Morrison, Allen~M. Fremont, Daniel~F. McCaffrey, Philip~M Pantoja, and Nicole Lurie.
\newblock Using the census bureau’s surname list to improve estimates of race/ethnicity and associated disparities.
\newblock {\em Health Services and Outcomes Research Methodology}, 9:69--83, 2009.

\bibitem{Freund}
Yoav Freund and Robert~E. Schapire.
\newblock Game theory, on-line prediction and boosting.
\newblock In {\em Proceedings of the Ninth Annual Conference on Computational Learning Theory}, 1996.

\bibitem{Han2005BorderlineSMOTEAN}
Hui Han, Wenyuan Wang, and Binghuan Mao.
\newblock Borderline-smote: A new over-sampling method in imbalanced data sets learning.
\newblock In {\em International Conference on Intelligent Computing}, 2005.

\bibitem{CondensedNearestNeighbor}
Peter~E. Hart.
\newblock The condensed nearest neighbor rule.
\newblock {\em IEEE Transactions on Information Theory}, pages 515--516, 1968.

\bibitem{He2008ADASYNAS}
Haibo He, Yang Bai, Edwardo~A. Garcia, and Shutao Li.
\newblock Adasyn: Adaptive synthetic sampling approach for imbalanced learning.
\newblock {\em 2008 IEEE International Joint Conference on Neural Networks (IEEE World Congress on Computational Intelligence)}, pages 1322--1328, 2008.

\bibitem{pittphilsci17169}
Gabbrielle Johnson.
\newblock Algorithmic bias: On the implicit biases of social technology, May 2020.
\newblock URL: \url{http://philsci-archive.pitt.edu/17169/}.

\bibitem{gerrymandering}
Michael Kearns, Seth Neel, Aaron Roth, and Zhiwei~Steven Wu.
\newblock Preventing fairness gerrymandering: Auditing and learning for subgroup fairness.
\newblock In {\em International Conference on Machine Learning}, pages 2564--2572. PMLR, 2018.

\bibitem{laftr}
David Madras, Elliot Creager, Toniann Pitassi, and Richard~S. Zemel.
\newblock Learning adversarially fair and transferable representations.
\newblock {\em CoRR}, abs/1802.06309, 2018.
\newblock URL: \url{http://arxiv.org/abs/1802.06309}, \href {https://arxiv.org/abs/1802.06309} {\path{arXiv:1802.06309}}.

\bibitem{kNN}
Inderjeet Mani and Jianping Zhang.
\newblock knn approach to unbalanced data distributions: A case study involving information extraction.
\newblock {\em Workshop on Learning from Imbalanced Datasets II, ICML}, 126:1--7, 2003.

\bibitem{propensityScoreReweighting}
Daniel Mccaffrey, Greg Ridgeway, and Andrew Morral.
\newblock Propensity score estimation with boosted regression for evaluating causal effects in observational studies.
\newblock {\em Psychological methods}, 9:403--25, 01 2005.
\newblock \href {https://doi.org/10.1037/1082-989X.9.4.403} {\path{doi:10.1037/1082-989X.9.4.403}}.

\bibitem{Menardi2012TrainingAA}
Giovanna Menardi and Nicola Torelli.
\newblock Training and assessing classification rules with imbalanced data.
\newblock {\em Data Mining and Knowledge Discovery}, 28:92--122, 2012.

\bibitem{Miao2016IdentifyingCE}
Wang Miao, Zhi Geng, and Eric J.~Tchetgen Tchetgen.
\newblock Identifying causal effects with proxy variables of an unmeasured confounder.
\newblock {\em Biometrika}, 105 4:987--993, 2016.
\newblock URL: \url{https://api.semanticscholar.org/CorpusID:88521475}.

\bibitem{MORO201422}
Sérgio Moro, Paulo Cortez, and Paulo Rita.
\newblock A data-driven approach to predict the success of bank telemarketing.
\newblock {\em Decision Support Systems}, 62:22--31, 2014.
\newblock URL: \url{https://www.sciencedirect.com/science/article/pii/S016792361400061X}, \href {https://doi.org/10.1016/j.dss.2014.03.001} {\path{doi:10.1016/j.dss.2014.03.001}}.

\bibitem{Nguyen2009BorderlineOF}
Hien~M. Nguyen, Eric~W. Cooper, and Katsuari Kamei.
\newblock Borderline over-sampling for imbalanced data classification.
\newblock {\em Int. J. Knowl. Eng. Soft Data Paradigms}, 3:4--21, 2009.

\bibitem{10.1145/3461702.3462629}
Valerio Perrone, Michele Donini, Muhammad~Bilal Zafar, Robin Schmucker, Krishnaram Kenthapadi, and C\'{e}dric Archambeau.
\newblock Fair bayesian optimization.
\newblock In {\em Proceedings of the 2021 AAAI/ACM Conference on AI, Ethics, and Society}, AIES '21, page 854–863, New York, NY, USA, 2021. Association for Computing Machinery.
\newblock \href {https://doi.org/10.1145/3461702.3462629} {\path{doi:10.1145/3461702.3462629}}.

\bibitem{Prost2019TowardAB}
Flavien Prost, Hai Qian, Qiuwen Chen, Ed~H. Chi, Jilin Chen, and Alex Beutel.
\newblock Toward a better trade-off between performance and fairness with kernel-based distribution matching.
\newblock {\em ArXiv}, abs/1910.11779, 2019.
\newblock URL: \url{https://api.semanticscholar.org/CorpusID:204900934}.

\bibitem{qiu2023doubly}
Hongxiang Qiu, Xu~Shi, Wang Miao, Edgar Dobriban, and Eric~Tchetgen Tchetgen.
\newblock Doubly robust proximal synthetic controls, 2023.
\newblock \href {https://arxiv.org/abs/2210.02014} {\path{arXiv:2210.02014}}.

\bibitem{comm5}
M.~A. Redmond and A.~Baveja.
\newblock A data-driven software tool for enabling cooperative information sharing among police departments, 2002.

\bibitem{shi2023theory}
Xu~Shi, Kendrick Li, Wang Miao, Mengtong Hu, and Eric~Tchetgen Tchetgen.
\newblock Theory for identification and inference with synthetic controls: A proximal causal inference framework, 2023.
\newblock \href {https://arxiv.org/abs/2108.13935} {\path{arXiv:2108.13935}}.

\bibitem{tchetgen2020introduction}
Eric J~Tchetgen Tchetgen, Andrew Ying, Yifan Cui, Xu~Shi, and Wang Miao.
\newblock An introduction to proximal causal learning, 2020.
\newblock \href {https://arxiv.org/abs/2009.10982} {\path{arXiv:2009.10982}}.

\bibitem{4309523}
I.~Tomek.
\newblock An experiment with the edited nearest-neighbor rule.
\newblock {\em IEEE Transactions on Systems, Man, and Cybernetics}, SMC-6(6):448--452, 1976.
\newblock \href {https://doi.org/10.1109/TSMC.1976.4309523} {\path{doi:10.1109/TSMC.1976.4309523}}.

\bibitem{tomek2}
I.~Tomek.
\newblock Two modifications of cnn.
\newblock {\em IEEE Transactions on Systems, Man, and Cybernetics}, 6:769--772, 1976.

\bibitem{comm1}
Bureau of the~Census U.~S. Department~of Commerce.
\newblock Census of population and housing 1990 united states: Summary tape file 1a \& 3a (computer files).

\bibitem{scHarvard}
U.S.
\newblock Students for fair admissions, inc. v. president and fellows of harvard college, 2023.

\bibitem{comm2}
Bureau Of The Census~Producer U.S. Department Of~Commerce, 1992.

\bibitem{comm3}
Bureau Of The Census~Producer U.S. Department Of~Commerce.
\newblock U.s. department of justice, bureau of justice statistics, law enforcement management and administrative statistics (computer file), 1992.

\bibitem{comm4}
Federal Bureau of~Investigation U.S. Department~of Justice.
\newblock Crime in the united states (computer file), 1995.

\bibitem{firstName}
Ioan Voicu.
\newblock Using first name information to improve race and ethnicity classification.
\newblock {\em Statistics and Public Policy}, 5:1 -- 13, 2016.

\bibitem{2deb2b11-39b5-327e-ade8-8ba30ec0e1fa}
Michael~R. Wickens.
\newblock A note on the use of proxy variables.
\newblock {\em Econometrica}, 40(4):759--761, 1972.
\newblock URL: \url{http://www.jstor.org/stable/1912971}.

\bibitem{Wilson1972AsymptoticPO}
Dennis~L. Wilson.
\newblock Asymptotic properties of nearest neighbor rules using edited data.
\newblock {\em IEEE Trans. Syst. Man Cybern.}, 2:408--421, 1972.

\bibitem{mixup}
Hongyi Zhang, Moustapha Cisse, Yann~N. Dauphin, and David Lopez-Paz.
\newblock mixup: Beyond empirical risk minimization, 2018.
\newblock \href {https://arxiv.org/abs/1710.09412} {\path{arXiv:1710.09412}}.

\bibitem{Zhang2016AssessingFL}
Yan Zhang.
\newblock Assessing fair lending risks using race/ethnicity proxies.
\newblock {\em Comparative Political Economy: Regulation eJournal}, 2016.

\bibitem{zinkevich}
Martin Zinkevich.
\newblock Online convex programming and generalized infinitesimal gradient ascent.
\newblock In {\em Proceedings of the Twentieth International Conference on Machine Learning}. Washington, DC, 2003.

\end{thebibliography}

\appendix
\section{Omitted Proofs}
\label{sec:app_proofs}
\vertexSplitting*
\begin{proof}

We want to find sufficient conditions for $\lVert U-U''\rVert \leq (1-\gamma) \lVert U-U' \rVert $. Let $\phi$ be the angle between the vectors $U-U''$ and $U-U'$. Then $\lVert U - U''\rVert = \lVert U-U'\rVert \cos \phi$. So, we would like to find conditions for which $\cos \phi \leq (1-\gamma)$. By the law of cosines, $\lVert V1-U' \rVert ^2 = \lVert V-U'\rVert^2 + \lVert V1-V \rVert^2 - 2 \lVert V-U'\rVert \lVert V1-V\rVert \cos(90+\theta)$ and
\begin{align*}
\cos(\phi) &= \frac{\lVert V-U'\rVert ^2 + \lVert V1-U'\rVert ^2 - \lVert R(V1) - R(V)\rVert ^2}{2\lVert V-U'\rVert  \lVert V1 -U'\rVert } \\
&=\frac{\lVert V-U'\rVert  - \lVert R(V1) - R(V)\rVert  \cos(90 + \theta)}{\sqrt{\lVert V-U'\rVert ^2 + \lVert R(V1) - R(V)\rVert ^2 -2 \lVert V-U'\rVert  \lVert R(V1) - R(V)\rVert  \cos(90 + \theta)}} \\
&=\frac{\lVert V-U'\rVert  +  \lVert R(V1) - R(V)\rVert  \sin \theta}{ \sqrt{\lVert V-U'\rVert ^2 + \lVert R(V1) - R(V)\rVert ^2 + 2 \lVert V-U'\rVert  \lVert R(V1) - R(V)\rVert  \sin \theta }} \\
\end{align*}

Setting $-(1-\gamma) \leq \cos \phi \leq 1 - \gamma$ and solving for $\sin \theta$, we see that this is satisfied by
\begin{align*}
\sin \theta \in &\frac{(\gamma^2 - 2\gamma) \lVert R(V) - U'\rVert }{\lVert R(V1) - R(V)\rVert } \pm \frac{(1-\gamma)\sqrt{(\gamma^2 - 2\gamma)\lVert R(V) - U'\rVert^2 + \lVert R(V1) - R(V)\rVert ^2}}{\lVert R(V1) - R(V)\rVert }
\end{align*}

To find a set of values for $\cos \theta$ that make the above expression always true, we will consider only $\gamma$ for which the set of values of $\sin \theta$ includes the origin. This is true for $\gamma \in \left[0, 1 - \sqrt{\frac{\lVert V-U'\rVert ^2}{\lVert V-U'\rVert ^2 + \lVert R(V1) - R(V)\rVert ^2}}\right]$. Then, 
\[ \scalemath{0.94}{\cos^2 \theta \geq 1 - \left( \frac{(\gamma^2 - 2\gamma) \lVert R(V) - U'\rVert}{\lVert R(V1) - R(V)\rVert }+ \frac{(1-\gamma)\sqrt{(\gamma^2 - 2\gamma)\lVert R(V) - U'\rVert^2 + \lVert R(V1) - R(V)\rVert ^2}}{\lVert R(V1) - R(V)\rVert }\right)^2} \]

Rearranging, we have

\begin{align*}
&\scalemath{0.85}{\lVert R(V1) - R(V)\rVert^2 \cos^2 \theta \geq}\\&\scalemath{0.85}{\lVert R(V1) - R(V)\rVert^2  - \left((\gamma^2 - 2\gamma) \lVert R(V) - U'\rVert +  (1-\gamma)\sqrt{(\gamma^2 - 2\gamma)\lVert R(V) - U'\rVert^2 + \lVert R(V1) - R(V)\rVert ^2}\right)^2}\\ 
&\scalemath{0.85}{=\left(2\gamma - \gamma^2 \right) \cdot \left(\lVert R(V1) - R(V)\rVert^2- \lVert R(V) - U'\rVert^2 + \right.} \\&\scalemath{0.85}{ \left. 2\lVert R(V) - U'\rVert \left(1-\gamma\right)\sqrt{\left(\gamma^2 - 2\gamma\right)\lVert R(V) - U'\rVert^2 + \lVert R(V1) - R(V)\rVert ^2}\right)}
\end{align*}

Using the fact that $\lVert R(V1) - R(V) \rVert^2 \leq 2$, we upper bound the right-hand side to say that a split satisfying the following condition will guarantee that we decrease the distance from $U$ to the convex hull by $\left(1-\gamma\right)$:
\begin{align*}
&\scalemath{0.91}{\lVert R(V1) - R(V)\rVert \cos \theta \geq} \\ & \scalemath{0.91}{\left(2\gamma - \gamma^2 \right)^{\frac{1}{2}} \left(2 - \lVert R(V) - U'\rVert^2 + 2\lVert R(V) - U'\rVert \left(1-\gamma \right)\sqrt{(\gamma^2 - 2\gamma)\lVert R(V) - U'\rVert^2 + 2}\right)^{\frac{1}{2}} :=f \left(\gamma \right)}
\end{align*}

\end{proof}

\split*

\begin{proof}
We begin by upper bounding the $L_2$ norm of the gradient:

\begin{align*}
    \lVert \nabla_{\lambda} L(\lambda, \tilde{h}_V) \rVert^2 = &\left(\sum_{k=1}^{K}  \sum_{i=1}^{m} \E_{h_V \sim \tilde{h}_V} w(x_i)h_V(x_i) \left( \1_{z_i = k} - r_k - \alpha \right)\right)^2 + \\& \left(\sum_{k=1}^{K} \sum_{i=1}^{m} \E_{h_V \sim \tilde{h}_V} w(x_i)(1-h_V(x_i)) \left( \1_{z_i = k} -r_k - \alpha \right)\right)^2 + \\
    &\left(\sum_{k=1}^{K}  \sum_{i=1}^{m} \E_{h_V \sim \tilde{h}_V} w(x_i)h_V(x_i) \left( r_k - \1_{z_i = k} -\alpha \right)\right)^2 + \\& \left(\sum_{k=1}^{K} \sum_{i=1}^{m} \E_{h_V \sim \tilde{h}_V} w(x_i)(1-h_V(x_i)) \left( r_k - \1_{z_i = k} - \alpha \right)\right)^2 \\
    &\leq 4 K^2 m^2\left(1 + \alpha\right)^2
\end{align*}

We now apply the regret bound for Online Gradient Descent from \cite{zinkevich}. With an appropriate choice of $\eta$ (derived below), we bound the Auditor's average regret over $T$ rounds:

\begin{align*}
    \frac{R_T}{T} &\leq \frac{\sup_{\lambda,\lambda' \in \Lambda}\lVert \lambda- \lambda' \rVert \lVert \nabla_{\lambda} L(\tilde{h},\lambda) \rVert \sqrt{T}} {T} \leq \frac{2 K m \left(1 + \alpha\right) \sup_{\lambda,\lambda' \in \Lambda} \lVert \lambda- \lambda' \rVert}{\sqrt{T}}
\end{align*}

Setting $T \geq \left( \frac{2 K m \left(1+\alpha\right) \sup_{\lambda,\lambda' \in \Lambda} \lVert \lambda- \lambda' \rVert}{\epsilon}\right) ^2$ and $\eta = \frac{\sup_{\lambda,\lambda' \in \Lambda} \lVert \lambda- \lambda' \rVert}{2 K m \left(1 + \alpha\right)\sqrt{T}}$, we have that $\frac{R_T}{T} \leq~\epsilon$. Because the Learner plays a no-regret strategy, we can apply Theorem~\eqref{thm:noregret} to assert that the mixed strategy of the Auditor and Learner together form an $\epsilon$-approximate equilibrium. Next, we must show that an approximate solution to the game corresponds to an approximate solution to Program~\eqref{eqn:expected_program}. We will show this using two cases. In the first case, we consider some $\tilde{h}_V^*$ that is a feasible solution to Program~\eqref{eqn:expected_program} at vertex $V$ and a $\hat{\lambda}$ that is an $\epsilon-$approximate minimax solution to the Lagrangian game specified in the Lagrangian above. Now we will analyze the case in which we have a solution $\tilde{h}_V$ that is an $\epsilon$-approximate solution to the Lagrangian game but is not a feasible solution for Program~\eqref{eqn:expected_program} -- we will show that this is impossible. To illustrate this, assume that we \textit{do} have such a $\tilde{h}_V$. Because it is not a feasible solution for Program~\eqref{eqn:expected_program}, some constraints must be violated. Let $\xi$ be the magnitude of the violated constraint, and let $\lambda$ be such that the dual variable for the violated constraint is set to $\lambda_{max} := \sup_{\lambda,\lambda' \in \Lambda} \lVert \lambda- \lambda' \rVert$. By definition of an $\epsilon$-approximate minimax solution, we know that $L(\hat{\lambda}, \tilde{h}_V) \geq L(\lambda, \tilde{h}_V) \geq \E_{h_V \sim \tilde{h}_V} \sum_{i=1}^{m} w_V h_V(x_i) \sum_{k=1}^{K}  \1_{z_i = k} \left(U_k' - U_k\right) + \lambda_{max} \xi - \epsilon$. Then, 
\begin{align*}
&\E_{h_V \sim \tilde{h}_V} \sum_{i=1}^{m}  w_V  h_V(x_i) 
 \sum_{k=1}^{K}\1_{z_i = k} \left(U_k' - U_k\right) + \lambda_{max} \xi \\ &\leq L(\tilde{h}_V, \hat{\lambda}) + \epsilon \leq L(\tilde{h}_V^*,\hat{\lambda}) + 2\epsilon  \leq \E_{h_V \sim \tilde{h}^*_V} \sum_{i=1}^{m} w_V h_V(x_i) 
 \sum_{k=1}^{K}  \1_{z_i = k} \left(U_k' - U_k\right) + 2\epsilon
\end{align*}

Finally, because $\E_{h_V \sim \tilde{h}^*_V} \sum_{i=1}^{m}  w_V h_V(x_i) \sum_{k=1}^{K} \1_{z_i = k} \left(U_k' - U_k\right) \leq \frac{m(K-1)}{K}$, we have that $\lambda_{max} \xi \geq \frac{m(K-1)}{K} + 2\epsilon$. Therefore, the maximum constraint violation is no more than $\frac{\frac{m(K-1)}{K} + 2\epsilon}{\lambda_{max}}$. Setting $\lambda_{max} = \frac{m(K-1)}{K\epsilon_\alpha} +2$, $\tilde{h}_V$ does not violate any constraint by more than $\epsilon$.
\end{proof}

\generalization*

\begin{proof}
Let $\tilde{A}_{k,j}=\Pr_{(x,y,z) \sim \Ps}[z=k,g(x)=j]$ and $A_{i,j}= \frac{1}{n} \sum_{i=1}^n w_j(x_i)\1_{z_i=k,g(x_i)=j}$. Then, Hoeffding's inequality gives us that, for fixed $k,j, g$

\begin{align*}
    &\Pr_{D \sim \Omega}\left[\left|\frac{1}{n}\sum_{i=1}^n w_j(x_i)\1_{z_i=k,g(x_i)=j} - \mathbb{E}_{(x,y,z) \sim \Ps}[\1_{z_i=k,g(x_i)=j}]\right| > \epsilon \right] \leq 2e^{-2\epsilon^2n} \\
\end{align*}

Recall from Theorem~\ref{thm:rounds}, our decision tree proxy will contain at most $\frac{\ln{\gamma} - \ln{\sqrt{2}}}{\ln{(1- \gamma)}}$ splits, and therefore there will be at most $\frac{\ln{\gamma} - \ln{\sqrt{2}}}{\ln{(1- \gamma)}}$ unique proxy groups. Applying a union bound over all $k,j$ pairs and fixed $g$, we see that 
 \begin{align*}
&\scalemath{0.95}{\Pr_{D \sim \Omega}\left[\cap_{k,j} \1_{\left|\frac{1}{n}\sum_{i=1}^n w_j(x_i)\1_{z_i=k,g(x_i)=j} - \mathbb{E}_{(x,y,z) \sim \Ps}[\1_{z_i=k,g(x_i)=j}]\right|} > \epsilon \right] \leq 2K\frac{\ln{\beta} - \ln{\sqrt{2}}}{\ln{(1- \gamma)}}e^{-2\epsilon^2 n}}
\end{align*}

Again applying a union bound, this time over the model class $g$ -- with VC dimension $d$ -- as well as $k,j$ pairs, we see that for all $k,j,g$, 
\begin{align*}
\label{eq:generalization_joint}
&\scalemath{0.95}{\Pr_{D \sim \Omega}\left[\cap_{k,j} \1_{\left|\frac{1}{n}\sum_{i=1}^n w_j(x_i)\1_{z_i=k,g(x_i)=j} - \mathbb{E}_{(x,y,z) \sim \Ps}[\1_{z_i=k,g(x_i)=j}]\right|} > \epsilon \right] \leq 2dK\frac{\ln{\beta} - \ln{\sqrt{2}}}{\ln{(1- \gamma)}}e^{-2\epsilon^2 n}}
\end{align*}

Setting this to be less than $\frac{\delta}{3}$, we obtain $2dK\frac{\ln{\beta} - \ln{\sqrt{2}}}{\ln{(1- \gamma)}}e^{-2\epsilon^2 n} \leq \frac{\delta}{3}$, which implies $n \geq \frac{1}{2\epsilon^2} \ln {\frac{6dk \left(\ln{\beta} - \ln{\sqrt{2}}\right)}{\delta \ln{(1- \gamma)}}}$. Then, with probability $1-\frac{\delta}{3}$

\begin{align*}
    \lVert \left(A-\tilde{A}\right)\rho \rVert _2 &\leq \sqrt{\sum_{j=1}^{J} \left(\sum_{k=1}^{K}\left(A_{k,j} - \tilde{A}_{k,j}\right) \cdot \rho_j \right)^2} <  \sqrt{\sum_{j=1}^{J} (K \epsilon \rho_j)^2} \leq K\epsilon\sqrt{\frac{\ln{\beta} - \ln{\sqrt{2}}}{\ln{(1- \gamma)}}}
\end{align*}

This bounds the degradation we expect in balance when we apply the proxy out of sample. Next, we consider the degradation in disclosiveness, which will depend on our estimates of $\Pr_{z \sim \Ps_z} [z=k]$ and $\Pr_{z|x \sim \Ps_{z|x}}[z=k|g(x)=j]$. First, we bound the empirical estimate of $\Pr_{z \sim \Ps_z}[z=k]$. Applying Hoeffding's inequality gives \\$\Pr_{D \sim \Omega}\left[\left|\frac{1}{n}\sum_{i=1}^n \1_{z_i=k} - \mathbb{E}_{z \sim \Ps_z}[\1_{z_i=k}]\right| > \epsilon \right] \leq 2e^{-2\epsilon^2 n}$. Applying a union bound over the range of $Z$ gives $\Pr_{D \sim \Omega}\left[\cap_{i=1}^{k}\left|\frac{1}{n}\sum_{l=1}^n \1_{z_i=k} - \mathbb{E}_{z \sim \Ps_z}[\1_{z_i=k}]\right| > \epsilon \right] \leq 2Ke^{-2\epsilon^2 n}$.
Setting this to be less than $\frac{\delta}{3}$ gives us: $2Ke^{-2\epsilon^2 n} \leq \frac{\delta}{3}
\implies n \geq \frac{1}{2\epsilon^2} \ln {\frac{6K}{\delta}}$.

Finally, repeating the exercise for $\Pr_{z|x \sim \Ps_{z|x}}[z|g(x)]$, we have that for fixed $g \in \mathcal{G}$ and $z \in Z$,
\begin{equation*}
\scalemath{0.92}{\Pr_{D \sim \Omega}\left[\left|\sum_{i=1}^n \frac{w_j(x_i)}{\sum_{i=1}^{n} w_j(x_i)}\1_{z_i=k|g(x_i)=j} - \sum_{i=1}^{n} w_j(x_i)\mathbb{E}_{z|x\sim \Ps_{z|x}}[\1_{z_i=k|g(x_i)=j}]\right| > \epsilon \right] \leq 2e^{-2\epsilon^2\sum_{i=1}^{n} w_j(x_i)}}
\end{equation*}

Applying a union bound over $z$, $g$, and the VC dimension of $\mathcal{G}$, and setting the probability to be less than $\frac{\delta}{3}$ gives us $2dK\frac{\ln{\beta} - \ln{\sqrt{2}}}{\ln{(1- \gamma)}}e^{-2\epsilon^2 \sum_{i=1}^{n} w_j(x_i)} \leq \frac{\delta}{3}$, which implies $\sum_{i=1}^{n} w_j(x_i) \geq \frac{1}{2\epsilon^2 }\ln{\frac{6dK \left(\ln{\beta} - \ln{\sqrt{2}}\right)}{\delta\ln{(1- \gamma)}}}$. Then, with probability $1-\delta$ both of our estimates for $\Pr_{z|x \sim \Ps_{z|x}}[z |g(x)]$ and $\Pr_{z \sim \Ps_z}[z]$ must be within $\epsilon$ of the true parameters if we have sample count 
$n \geq \frac{1}{2\epsilon^2} \max \{\ln {\frac{6K}{\delta}},\ln{\frac{6dK\left(\ln{\gamma} - \ln{\sqrt{{2}}}\right)}{\delta\ln{(1- \gamma)}}}\}$. Note that $\max \{\ln {\frac{6K}{\delta}},\ln{\frac{6dK\left(\ln{\gamma} - \ln{\sqrt{2}}\right)}{\delta\ln{(1- \gamma)}}}\} \leq \ln {\frac{6dK \left(\ln{\gamma} - \ln{\sqrt{{2}}}\right)}{\delta \ln{(1- \gamma)}}}$. Then, taking $n \geq \frac{1}{2\epsilon^2}\ln {\frac{6dK \left(\ln{\gamma} - \ln{\sqrt{2}}\right)}{\delta \ln{(1- \gamma)}}}$ suffices. Finally, we can apply our concentration bounds to the expression for disclosure level. If we obtain an $\alpha$-disclosive proxy in-sample, this is equivalent to satisfying, for all $z \in Z$ and $g \in \mathcal{G}$, $\lvert \Pr_{z|x \sim D}[z|g(x)] - \Pr_{z \sim D}[z] \rvert \leq  \alpha \implies \lvert \Pr_{z|x \sim \Ps_{z|x}}[z|g(x)] - \Pr_{z \sim \Ps_z}[z] \rvert \leq  \alpha + 2\epsilon$
\end{proof}

\section{Additional Experimental Details}
\label{sec:app_exp}

In Figure~\ref{fig:xgb}, we show trade-off curves for balance and disclosure when using XGB as the base model. In these plots we also explore a slight relaxation of our $(\alpha,\beta)$ Proxy, in which we remove the constraint $\sum_i q_i =1$ when solving $\min_q \lVert qA -U \rVert_2 \text{ subject to } q_i \geq 0 \; \forall i$. We find that both the original and relaxed version perform similarly. On Communities, there is less stability displayed by the proxies trained with the XGB base model compared to those trained with the PRC base model. On Adult, the proxies trained with XGB as a base model exhibit a smoother trade-off curve. On the Marketing data set, our proxy approach dominates in sample for smaller levels of $\alpha$ but struggles to generalize. 

\begin{figure}[!ht]
\captionsetup[subfigure]{justification=centering}
    \centering
    \begin{subfigure}[b]{0.32\textwidth}
    \centering
    \includegraphics[width=\textwidth]{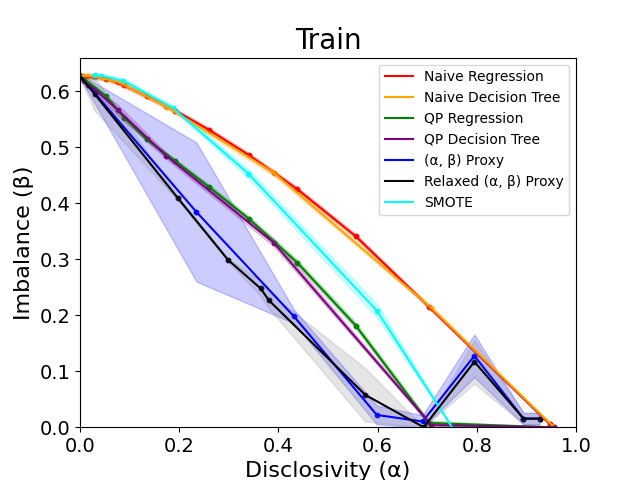}
    \end{subfigure}
    \hfill
    \begin{subfigure}[b]{0.32\textwidth}
    \centering
    \includegraphics[width=\textwidth]{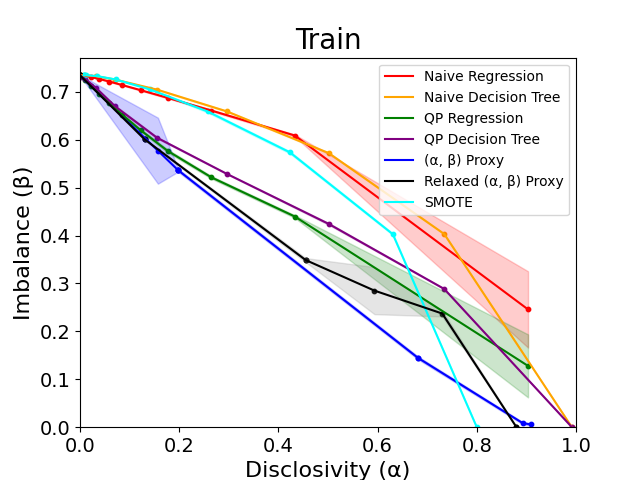}
    \end{subfigure}
    \hfill
    \begin{subfigure}[b]{0.32\textwidth}
    \includegraphics[width=\textwidth]{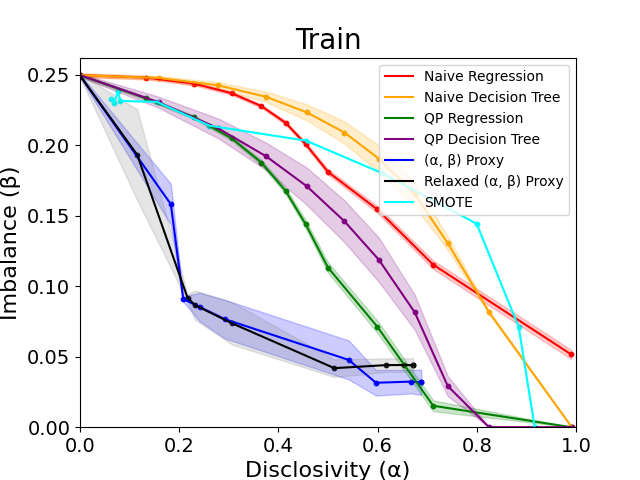}
    \end{subfigure}
    \vfill
    \begin{subfigure}[b]{0.32\textwidth}
    \centering
    \includegraphics[width=\textwidth]{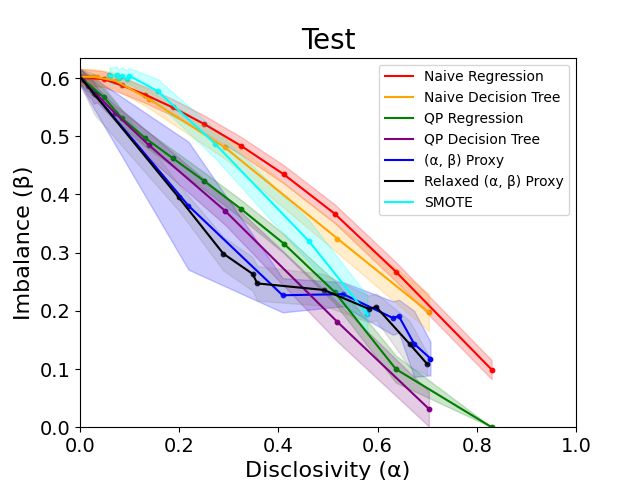}
    \caption{Communities}
    \end{subfigure}
    \hfill
    \begin{subfigure}[b]{0.32\textwidth}
    \centering
    \includegraphics[width=\textwidth]{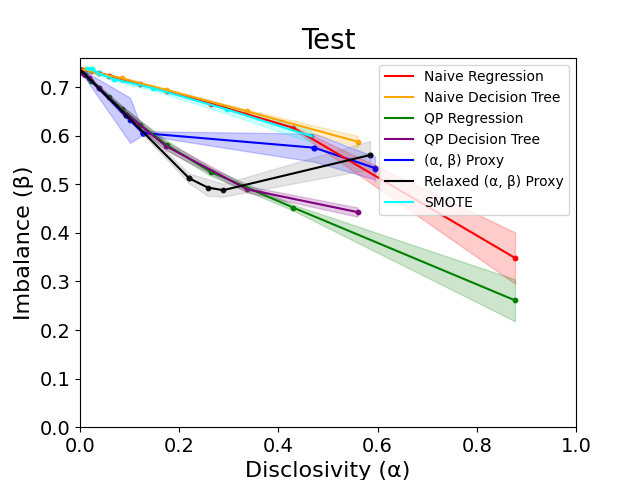}
    \caption{Adult}
    \end{subfigure}
    \hfill
    \begin{subfigure}[b]{0.32\textwidth}
    \includegraphics[width=\textwidth]{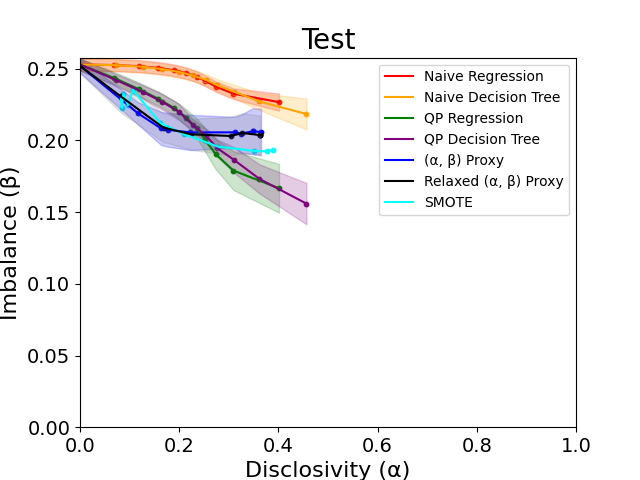}
    \caption{Marketing}
    \end{subfigure}
    \caption{Trade-off of Disclosure and Balance for Proxy Models on the Communities, Adult, and Marketing data sets with XGBoost Base Model}
    \label{fig:xgb}
\end{figure}

Next, we analyze the downstream fairness impact resulting from using our proxy filtering approach to prepare machine learning training datasets. Here, we train a model for the data-specific classification or regression task on an unfiltered sample and a filtered sample of the same size, and we compare the differences in gropup-wise accuracy obtained by each model. We consider the downstream fairness impact of training a model on data that has been filtered by our $(\alpha,\beta)$ proxy function but find our results are inconclusive. While we are able to theoretically guarantee a certain level of balance in the filtered data set, we cannot guarantee that the distribution over features and labels will not be skewed in the filtered set, nor can we guarantee that the distribution over features and labels given sensitive attributes will not be distorted. To test this, we first use an $(\alpha,\beta)$ proxy with a specified $\alpha$ budget to filter the Test set into a balanced sub-sample. Then, we train two model for the dataset specific classification task, one on the filtered data, and the other on a down-sampled version of the original Test set of the same size. We calculate the accuracy of the models on each sensitive group and then plot the \textit{difference} in accuracy between the two models, calculated as the group accuracy on the filtered data minus the group accuracy on the unfiltered data. Thus, positive values indicate an improvement in group accuracy from training on the filtered data, while negative values indicate a decrease. Between the three data sets, we see mixed results, displayed in Figure~\ref{fig:commFairness}. On the Communities data set, we broadly see improvement on lower accuracy groups when using the model trained on the filtered data. However, results from the Adult data set in show a decrease in performance across all groups, and results from the Marketing data set show improvement for one of the least represented groups, but a decrease in performance for most others.

\begin{figure}
\captionsetup[subfigure]{justification=centering}
    \centering
    \begin{subfigure}{\textwidth}
    \centering
    \begin{subfigure}{0.32\textwidth}
    \centering
    \includegraphics[width=\textwidth]{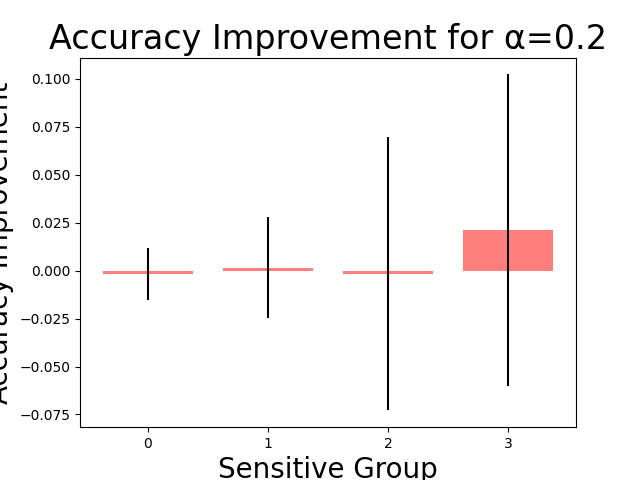}
    \end{subfigure}
    \hfill
    \begin{subfigure}{0.32\textwidth}
    \centering
    \includegraphics[width=\textwidth]{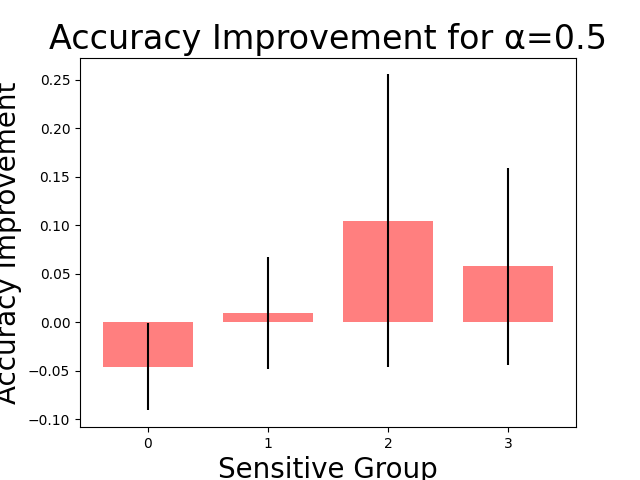}
    \end{subfigure}
    \hfill
    \begin{subfigure}{0.32\textwidth}
    \centering
    \includegraphics[width=\textwidth]{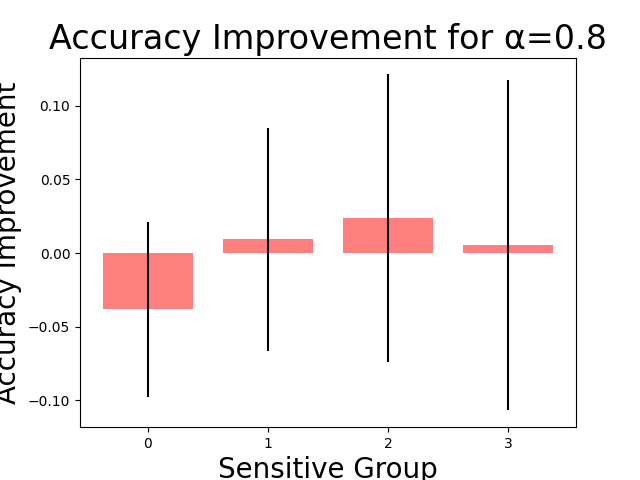}
    \end{subfigure}
    \caption{Communities}
    \end{subfigure}
    \vfill
    \begin{subfigure}{\textwidth}
    \centering
    \begin{subfigure}{0.32\textwidth}
    \centering
    \includegraphics[width=\textwidth]{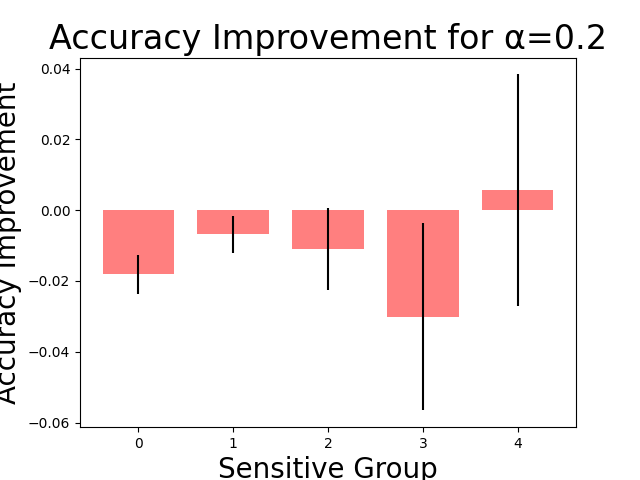}
    \end{subfigure}
    \hfill
    \begin{subfigure}{0.32\textwidth}
    \centering
    \includegraphics[width=\textwidth]{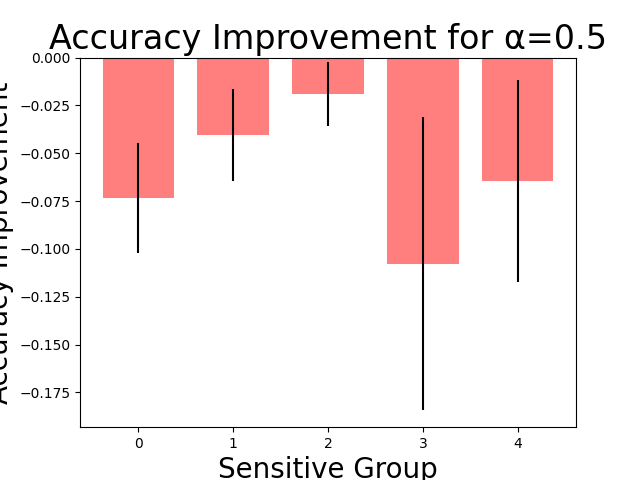}
    \end{subfigure}
    \hfill
    \begin{subfigure}{0.32\textwidth}
    \centering
    \includegraphics[width=\textwidth]{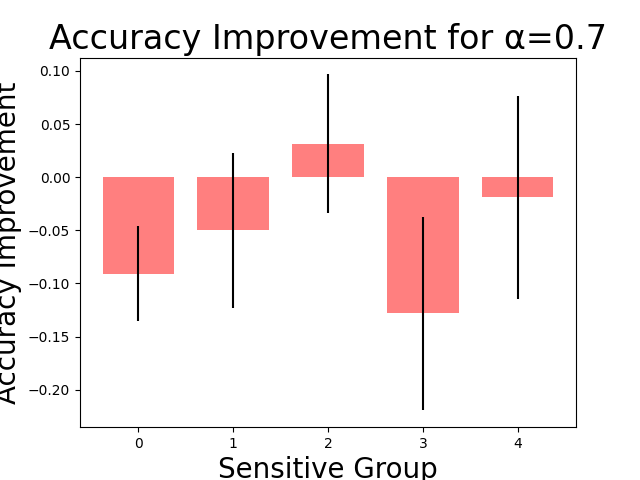}
    \end{subfigure}
    \caption{Adult}
    \end{subfigure}
    \vfill
    \begin{subfigure}{\textwidth}
    \centering
    \begin{subfigure}{0.32\textwidth}
    \centering
    \includegraphics[width=\textwidth]{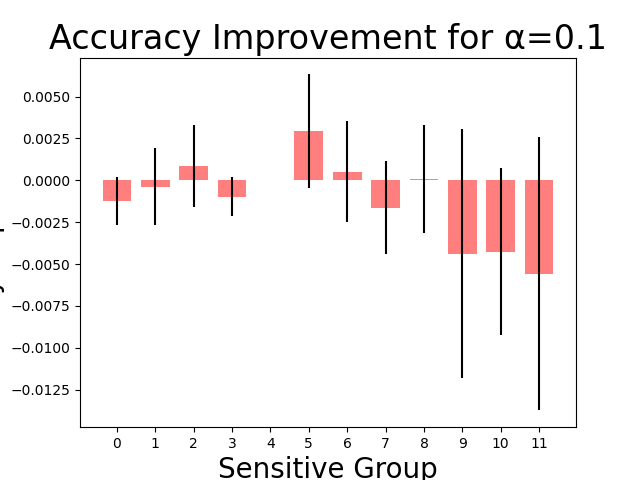}
    \end{subfigure}
    \hfill
    \begin{subfigure}{0.32\textwidth}
    \centering
    \includegraphics[width=\textwidth]{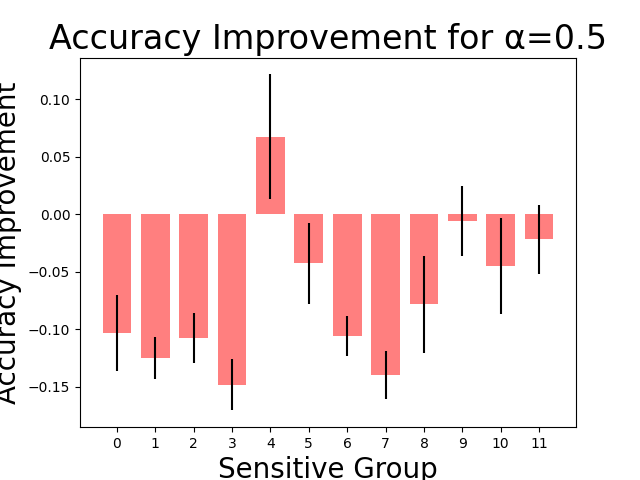}
    \end{subfigure}
    \hfill
    \begin{subfigure}{0.32\textwidth}
    \centering
    \includegraphics[width=\textwidth]{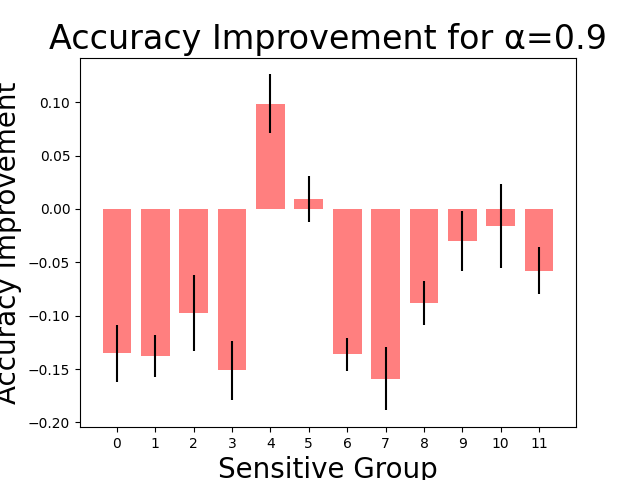}
    \end{subfigure}
    \caption{Marketing}
    \end{subfigure}
    \caption{Difference in accuracy between models trained on filtered and unfiltered data on the Communities, Adult, and Marketing data sets with PRC base model.}
    \label{fig:commFairness}
\end{figure}
\end{document}